\documentclass[10pt]{article}

\usepackage[letterpaper,margin=1in]{geometry}
\usepackage[T1]{fontenc}
\usepackage[utf8]{inputenc}
\usepackage{amsmath,amssymb,amsthm}
\usepackage{booktabs}
\usepackage{array}
\usepackage{graphicx}
\usepackage{flafter}
\usepackage{float}
\usepackage{placeins}
\usepackage{microtype}
\usepackage{enumitem}
\usepackage{needspace}
\usepackage{subcaption}
\usepackage[hidelinks]{hyperref}

\graphicspath{{generated_current/}{generated_phase05/nonlinear_suffix/}{figures/}}
\setlist[itemize]{leftmargin=1.5em,itemsep=0.35em,topsep=0.4em}
\newcolumntype{L}[1]{>{\raggedright\arraybackslash}p{#1}}
\newtheorem{proposition}{Proposition}

\title{ObserverBench: Testing Mechanistic Estimates for Intervention and Control}
\author{Vijay Erramilli\\\texttt{evijay@gmail.com}}
\date{September 1, 2026}

\begin{document}
\maketitle

\begin{abstract}
Mechanistic interpretability is increasingly used to guide interventions, such
as steering activations, removing circuits, or enforcing safety boundaries.
But an internal estimate that is accurate on average can still lead to a poor
decision when it is used to choose a specific action.

We present \textbf{ObserverBench}, a benchmark framework for testing whether
an internal estimator---an observer---is adequate for the intervention,
control, or safety task it directs. Each task fixes the model, information
available to the observer, allowed actions, decision rule, unseen test cases,
and loss. ObserverBench reports both estimation accuracy and downstream action
performance. Its rankings are task-specific: observers are compared only
under the same information and action constraints.

Our theory and experiments show three ways these criteria can differ:

\begin{itemize}[leftmargin=1.4em,itemsep=0.15em,topsep=0.25em]
  \item \textbf{Task-relevant accuracy.} In a control loop, observer errors
  matter at the starting point and along directions the allowed intervention
  can reach.

  \item \textbf{Prediction versus decision.} On circuit-intervention tasks in
  GPT-2-small and Qwen2.5-7B, pairwise observers predict unseen intervention
  effects more accurately without always choosing better actions. Observers
  trained to predict action loss directly choose lower-loss actions. On one
  Qwen APPS panel, a 127-coefficient SAE readout has higher mean audit loss
  than its layer-matched 3,584-coordinate dense control, under a disclosed
  activation-density mismatch (observed $L_0=115$ versus reported $240$):
  compression without a better decision on that panel.

  \item \textbf{Consequence-aware safety.} Under fixed intervention budgets,
  a score that perfectly separates policy violations can allocate actions
  poorly when violations have different costs. On matched Qwen2.5-7B,
  Gemma-2-9B-it, and prospectively frozen Qwen3.5-9B APPS tasks, residual access
  leads on average, but the lowest-loss context changes---even between the two
  Qwen checkpoints---and AUROC can rank monitors differently from deployment
  loss. On Qwen3.5, an official Qwen-Scope SAE probe also trails its dense
  source residual under a declared base-to-posttraining checkpoint mismatch.
\end{itemize}

ObserverBench provides a common way to evaluate interpretability methods
through the actions they enable.

\end{abstract}

\section{Introduction}

Interpretability methods are increasingly used to steer and govern model
behavior rather than merely explain it. Internal estimates now guide concrete
decisions: which activation-steering direction to apply, which attention heads
to remove, whether to permit a risky tool call, or which requests receive a
limited human-review budget. Activation engineering and feedback-control
methods make these estimates part of an active control loop
\cite{turner2023activation,zou2023representation,nguyen2025feedback,skifstad2026local}.
Once an internal estimate is used to choose an action, statistical accuracy
alone is not enough
\cite{donti2017task,elmachtoub2022smart,orgad2026actionable}: an estimate can
perform well on average and still cause costly errors on individual prompts.

To analyze these systems, we separate the decision loop into three parts. The
\textbf{observer} reads measurements and estimates an internal state or
intervention effect. The \textbf{controller} uses that estimate to choose an
action. The \textbf{intervention} is the edit, ablation, or policy action it is
allowed to take. ObserverBench asks whether the observer is adequate for that
specific decision.

\Needspace{7\baselineskip}
Three failure modes illustrate why prediction accuracy and decision quality
must be reported separately. In safety triage, an observer can classify policy
violations perfectly yet spend a limited review budget on low-stakes cases
while allowing higher-impact harms through. This is \emph{consequence
disregard} (Sec.~\ref{sec:safety}). In model editing, an observer can predict a
circuit ablation's mean effect accurately yet choose an edit whose effect
varies widely across prompts. This is \emph{dispersion blindness}
(Sec.~\ref{sec:ioi}). An internal activation probe can also repeat information
already visible in the prompt or output logits. Such a \emph{redundant signal}
adds model access without improving the downstream decision
(Sec.~\ref{sec:safety}).

We introduce \textbf{ObserverBench}\footnote{Public workbench and submission
page: \url{https://kwisatzh.github.io/observerbench/}.},
a set of fixed, reusable tasks that makes these operational distinctions
measurable. Each task pins the model, information available to the observer,
allowed interventions, controller, measurement budget, held-out test cases,
and loss. Every observer faces the same controller and evaluation harness
within a task. The evaluator reports two outcomes separately: how accurately
the observer predicts its target and how much loss its chosen action incurs on
held-out cases.

ObserverBench focuses on the operational stage of interpretability.
Mechanistic tomography addresses the measurement problem: how to design probes
and interventions that recover hidden internal mechanisms and effect maps.
ObserverBench addresses the decision problem: whether a proposed estimate is
adequate to guide a particular intervention, control loop, or safety policy
\cite{erramilli2026tomography}. For safety tasks, ObserverBench freezes the audit policy and compares the
monitor inside it. AI-control evaluations ask the complementary, harder
question of whether the complete protocol remains safe when an adaptive model
tries to subvert it \cite{greenblatt2024aicontrol}. A monitor that performs
well here must still pass that end-to-end test.

An ObserverBench evaluation makes five choices explicit. It states the
\emph{target}: an internal state, an intervention's average effect, or perhaps
the loss of one action on one prompt. It states the \emph{information and
integrity boundary}: what the observer may read and who can alter, omit, delay,
or spoof those measurements. It fixes the \emph{action space} and the
\emph{decision rule and loss}, which determine how estimates become actions
and which errors matter. Finally, it declares the \emph{operating regime},
including prompts, intervention budgets, attack frequencies, episode horizons,
and attacker assumptions.

These choices make observer adequacy task-relative: permissions, tool
interfaces, budgets, attack rates, side channels, and evaluator-held evidence
are part of the evaluation, not background details. In the 2026 OpenAI--Hugging
Face incident, agents used an unsanctioned channel and spoofed some tool calls,
motivating the integrity boundary
\cite{openai2026securityincident,huggingface2026securityincident,
greenblatt2026hfinvestigation}. ObserverBench instantiates these choices in
three standardized operational modes---closed-loop control, effect prediction,
and safety triage---summarized in Sec.~\ref{sec:artifact}.

\Needspace{8\baselineskip}
\paragraph{Contributions.}

\begin{itemize}[itemsep=0.2em,topsep=0.3em]
  \item \textbf{An extensible evaluation workbench.} Versioned task and
  observer cards support local implementations and table-based submissions
  under one common evaluator (Sec.~\ref{sec:artifact}).

  \item \textbf{A formal adequacy test for control.} For linear readouts with
  an offset, an observer needs to match the target only at the starting point
  and along directions the allowed interventions can reach. An exact
  finite-horizon identity then separates residual reported error,
  starting-state mismatch, and target--observer response mismatch along the
  edit direction (Sec.~\ref{sec:design}, Eq.~\ref{eq:certificate}).

  \item \textbf{Prediction and compression are not decisions.} Pairwise
  models improve held-out effect prediction on GPT-2-small and Qwen2.5-7B, but
  better mean prediction does not reliably improve the chosen GPT-2 action.
  Relative to the same-feature mean observer, direct-loss training reduces
  action loss by 18.5\% in the clean GPT-2 confirmation and by 8.9\% in the
  Qwen aggregate \cite{wang2023ioi,olsson2022induction,qwen2024qwen25}. In the
  Qwen safety task, a sparse SAE readout uses far fewer coefficients yet incurs
  more escaped attacks: \emph{compression without a better decision}
  (Secs.~\ref{sec:phase7-confirmation}, \ref{sec:qwen-induction}, and
  \ref{sec:safety}) \cite{kantamneni2025saeprobes}.

  \item \textbf{Classification is not enough for safety.} In the authorization
  task, a label observer with AUROC 1.000 allocates a fixed intervention budget
  poorly when it ignores consequence. On the external Gemma APPS task, AUROC
  and realized violations reverse the order of two monitors. Across matched
  Qwen2.5-7B, Gemma-2-9B-it, and prospectively frozen Qwen3.5-9B tasks, residual
  observers lead on average, while the lowest-loss context changes---even
  between the two Qwen checkpoints
  (Sec.~\ref{sec:safety})
  \cite{greenblatt2024aicontrol,inglis2025controlarena,
  gemmateam2024gemma2,lieberum2024gemmascope}.
\end{itemize}

The next question is operational: what exactly does ObserverBench fix, what
may a researcher submit, and what does the evaluator return?

\section{How ObserverBench evaluates an observer}
\label{sec:artifact}

A fair comparison keeps the decision problem fixed. Otherwise, lower loss
could reflect a better observer, a more capable controller, easier actions, or
a different test distribution. ObserverBench therefore treats these choices
as part of the task.

Each evaluation uses a \textbf{TaskCard}, an \textbf{ObserverCard}, and a
\textbf{result scorecard} in four steps:

\begin{enumerate}[leftmargin=1.4em,itemsep=0.2em,topsep=0.3em]
  \item \textbf{Choose a task.} The TaskCard defines the decision problem using
  the five choices above and fixes the training measurements, held-out cases,
  budgets, and clean-performance checks. These do not change when observers
  are compared.
  \item \textbf{Submit an observer.} Any method may be used if it respects the
  task's information and integrity boundary. The ObserverCard records what the
  observer reads, where those measurements come from, how it was fit or
  calibrated, what model access it requires, and any known failure conditions.
  For a control task, it also declares whether the submission replaces the
  state estimate, the edit direction, or both. If both change, the score applies
  to the pair.
  \item \textbf{Run the common evaluator.} Every observer receives the same
  held-out cases and faces the same controller, actions, budgets, and loss.
  \item \textbf{Compare outcomes and cost.} The result scorecard reports prediction
  quality, downstream action loss, and the access or measurement cost of
  obtaining the estimate.
\end{enumerate}

There are two ways to cross the observer--evaluator boundary. A local
implementation can fit and run a method that needs model inference or internal
activations. A public submission can instead provide a table of predictions or
scores for the task's released cases. The second route lets a researcher test
an observer without installing the original model or experiment stack. In both
routes, the task---not the observer---applies the held-out targets, controller,
and loss.

The three operational modes share this contract but ask different questions
(Table~\ref{tab:operational-modes}).

\begin{table}[H]
\centering
\small
\caption{ObserverBench operational modes. Each row defines a complete,
task-relative comparison.}
\label{tab:operational-modes}
\begin{tabular}{@{}L{0.14\linewidth}L{0.23\linewidth}L{0.31\linewidth}L{0.23\linewidth}@{}}
\toprule
\textbf{Mode} & \textbf{Observer supplies} & \textbf{Evaluator fixes} &
\textbf{Main outputs} \\
\midrule
Closed-loop control & A state estimate, an edit direction, or a declared pair
& Model, starting states, allowed edits, controller, horizon, and loss
& Estimation error, trajectory error, and collateral movement \\
\addlinespace
Effect prediction & Predicted effects or losses for held-out interventions
& Calibration measurements, held-out interventions, action rule, and loss
& Prediction error and selected-action loss \\
\addlinespace
Safety triage & One risk or consequence score per request or trajectory
& Test cases, prevalence, action policy, capacity budget, and consequence loss
& Caught and missed harm, false interventions, and clean utility \\
\bottomrule
\end{tabular}
\end{table}

ObserverBench does not collapse unlike modes or task versions into a single
overall rank. Each scorecard identifies which observer is better for one
declared use.

The workflow makes results comparable. Three additional rules keep each
comparison meaningful:

\begin{itemize}[leftmargin=1.5em,itemsep=0.2em,topsep=0.3em]
  \item \textbf{Separate construction from evaluation.} Fix targets, splits,
  and interventions before measuring held-out performance. Check first that
  the unedited model succeeds on the base task and required groups; we call
  this the \emph{clean stop gate}.

  \item \textbf{Make the comparison observable.} The measurement design must
  actually exercise the effects being compared. For example, an interaction
  cannot be assessed if the relevant component combination is almost never
  measured. Repeated control should report error across the trajectory rather
  than only a threshold crossing.

  \item \textbf{Evaluate the quantity used to choose the action.} Mean-effect
  prediction, prompt-level intervention loss, and consequence-weighted safety
  risk are different targets. The task should score the one the controller
  actually uses.
\end{itemize}

This contract tells us how to compare observers, but it leaves a prior
question unanswered: what must an observer preserve for its estimate to be
adequate for control? Section~\ref{sec:design} answers that question by
restricting accuracy to the states that the allowed interventions can reach.

\section{Closed-loop control: what an observer must preserve}
\label{sec:design}

Once the controlled model, initial states, allowed intervention rule,
controller, and loss are fixed, only observer errors that affect reachable
states can change the controlled trajectory. An observer turns a model
measurement into information that can guide an intervention. The distinction
between the estimate and the intervention is important. An observer may
correctly report how far the model is from a target while pointing the
controller in an unhelpful direction; conversely, a useful direction can be
paired with a poor estimate.

For model state $h$, let the observer estimate a quantity $\widehat z$ and let
$d$ denote the proposed intervention direction:
\[
E(h)=\widehat z,
\qquad
D(h)=d.
\]
The controller uses the estimate to choose an action magnitude,
\[
u=C(y^\star,\widehat z).
\]
Within this fixed system, testing $E$ keeps $D$ fixed, testing $D$ keeps $E$
fixed, and changing both identifies only the combined system.

\Needspace{6\baselineskip}
An observer need not be correct on every possible model state. It needs to be
correct on the states that the allowed interventions can reach. To state this
concretely, suppose an activation has three coordinates but the controller can
move only the first two. A wrong coefficient on the third coordinate cannot
affect this controller. A wrong coefficient on the first can.

To state the same idea precisely, consider an affine---linear plus an
offset---target and observer:
\[
z(h)=b_T+\beta_T^\top h,
\qquad
\widehat z(h)=b_E+\beta_E^\top h.
\]
Let $h_0$ be the starting state. Let the columns of $B$ span the allowed edit
directions. From $h_0$, the controller can reach only
\[
\mathcal H_R=h_0+\mathcal U,
\qquad
\mathcal U=\operatorname{span}(B).
\]
The observer only has to match the target on this reachable set. With one fixed
direction $d$, this means that its error along $d$ must be zero.

\begin{proposition}[Reachable-subspace adequacy]
\label{prop:reachable}
Define
\[
\delta_0=z(h_0)-\widehat z(h_0),
\qquad
\delta_\beta=\beta_T-\beta_E.
\]
Then $z(h)=\widehat z(h)$ for every $h\in\mathcal H_R$ if and only if
\[
\delta_0=0
\qquad\text{and}\qquad
P_{\mathcal U}\delta_\beta=0,
\]
where $P_{\mathcal U}$ is the orthogonal projector onto $\mathcal U$.
\end{proposition}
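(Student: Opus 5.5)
The plan is to reduce the pointwise identity on the affine set $\mathcal H_R$ to a statement about an affine function of the edit coordinate, and then read off its constant and linear parts. Every $h\in\mathcal H_R$ can be written as $h=h_0+v$ with $v\in\mathcal U$. Because both $z$ and $\widehat z$ are affine, their difference expands as
\[
z(h_0+v)-\widehat z(h_0+v)=\bigl(b_T-b_E+(\beta_T-\beta_E)^\top h_0\bigr)+(\beta_T-\beta_E)^\top v=\delta_0+\delta_\beta^\top v .
\]
The proposition is therefore equivalent to the claim that $\delta_0+\delta_\beta^\top v=0$ for all $v\in\mathcal U$ if and only if $\delta_0=0$ and $P_{\mathcal U}\delta_\beta=0$.

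For the \emph{only if} direction, I would first take $v=0$, which gives $\delta_0=0$. The identity then reduces to $\delta_\beta^\top v=0$ for all $v\in\mathcal U$, meaning $\delta_\beta\in\mathcal U^\perp$. To turn this into a statement about the projector, I would use that $P_{\mathcal U}$ is symmetric and idempotent with range $\mathcal U$. Taking $v=P_{\mathcal U}\delta_\beta\in\mathcal U$ gives
\[
0=\delta_\beta^\top P_{\mathcal U}\delta_\beta=\|P_{\mathcal U}\delta_\beta\|^2,
\]
and hence $P_{\mathcal U}\delta_\beta=0$.

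For the \emph{if} direction, I would fix any $v\in\mathcal U$, so that $v=P_{\mathcal U}v$. Then
\[
\delta_\beta^\top v=\delta_\beta^\top P_{\mathcal U}v=(P_{\mathcal U}\delta_\beta)^\top v=0 .
\]
Together with $\delta_0=0$, the difference $z-\widehat z$ vanishes on all of $\mathcal H_R$.

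There is no real obstacle in this argument. The only step needing care is the passage from ``orthogonal to every element of $\mathcal U$'' to ``$P_{\mathcal U}\delta_\beta=0$''; I would handle it with the symmetric-idempotent choice $v=P_{\mathcal U}\delta_\beta$ rather than by arguing with a basis of $\operatorname{span}(B)$. As a closing remark, I would note the single-direction case $B=d$: the condition becomes $\delta_0=0$ together with $d^\top\delta_\beta=0$, which matches the informal statement before the proposition that the observer's error along $d$ must be zero.
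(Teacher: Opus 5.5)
Your proof is correct and matches the paper's argument: the paper also writes reachable states as $h_0+Bv$, reduces the readout difference to $\delta_0+\delta_\beta^\top Bv$, and reads off $\delta_0=0$ together with $B^\top\delta_\beta=0$, which is equivalent to $P_{\mathcal U}\delta_\beta=0$. The differences are cosmetic: you parametrize $\mathcal U$ directly rather than through $B$, and you prove the projector equivalence explicitly via $\|P_{\mathcal U}\delta_\beta\|^2$, a step the paper leaves implicit.
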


\begin{proof}
Every reachable state has the form $h=h_0+Bv$. The readout difference is
$\delta_0+\delta_\beta^\top Bv$. It is zero for every $v$ exactly when
$\delta_0=0$ and $B^\top\delta_\beta=0$, which is equivalent to
$P_{\mathcal U}\delta_\beta=0$.
\end{proof}

With one fixed direction $d$, Proposition~\ref{prop:reachable} reduces adequacy
to $(\beta_T-\beta_E)^\top d=0$.

\subsection{Why a convergent observer can still miss the target}

The proposition tells us where an observer must agree with the target. The
next question is what happens when its remaining error enters a feedback loop.
Consider an unclipped controller that repeatedly edits along one direction:
\[
h_{t+1}=h_t+K\widehat e_t d,
\qquad
\widehat e_t=y^\star-\widehat z(h_t).
\]
Here $y^\star$ is the desired value, $K$ sets the edit size, and
$\widehat e_t$ is the error reported by the observer. Three quantities govern
the loop:
\[
g_E=\beta_E^\top d,
\qquad
g_T=\beta_T^\top d,
\qquad
p=1-Kg_E.
\]
The response $g_E$ is what the observer expects along the edit direction, while
$g_T$ is the target's actual response. The convergence factor $p$---called the
closed-loop pole in control theory---says how quickly the error reported by the
observer shrinks. After $T$ steps, the total action is
\[
A_T=
\begin{cases}
\widehat e_0(1-p^T)/g_E, & g_E\neq0,\\
TK\widehat e_0, & g_E=0,
\end{cases}
\]
and the two errors are exactly
\[
\widehat e_T=p^T\widehat e_0,
\qquad
e_T=e_0-g_T A_T,
\]
where $e_t=y^\star-z(h_t)$. For $g_E\neq0$, define
\[
m_0=z(h_0)-\widehat z(h_0),
\qquad
\rho=\frac{g_T-g_E}{g_E}.
\]
Substitution gives the finite-horizon certificate
\begin{equation}
\label{eq:certificate}
e_T=p^T\widehat e_0-m_0-\rho(1-p^T)\widehat e_0.
\end{equation}

\Needspace{8\baselineskip}
The equation separates three sources of error:

\begin{itemize}
  \item $p^T\widehat e_0$ is the error the observer still reports after $T$
  steps;
  \item $m_0$ is a mismatch at the starting state; and
  \item $\rho(1-p^T)\widehat e_0$ is the mismatch between the response the
  observer expects and the response the target actually has.
\end{itemize}

The observer's reported error converges when $|p|<1$, or $0<Kg_E<2$ for
positive $K$, but this controls only the first term. For example, if $p^T$ is
nearly zero and $m_0=0.2$, the observer reports success while the true target
remains off by about $0.2$, even before response mismatch is counted. A
convergent observer is not necessarily tracking the true target.

\subsection{A simple control certificate remains predictive in a Transformer}
\label{sec:nonlinear-suffix}

Equation~\ref{eq:certificate} is exact for an affine system. A Transformer is
nonlinear. Our first experiment asks whether the same three sources of
error---remaining reported error, starting mismatch, and response mismatch---
remain useful when every edit passes through learned Transformer layers. We
begin with a small model whose target we can verify exactly, following the
controlled ``wind-tunnel'' approach of \cite{agarwal2025bayesian}.

\paragraph{Task contract.}
This experiment instantiates the control mode as follows:

\begin{table}[H]
\centering
\small
\begin{tabular}{@{}L{0.21\textwidth}L{0.71\textwidth}@{}}
\toprule
Dimension & Fixed contract \\
\midrule
Target & Scalar state readout and its error over a 15-step trajectory. \\
Information boundary & The residual after the first Transformer block;
first-order observers use the two individual features, while lifted observers
also use their product. \\
Action space & Repeated residual edits along the declared observer direction. \\
Decision rule and loss & Proportional feedback with $K=0.35$; integrated
squared tracking error is primary, with final error and clipping as diagnostics. \\
Operating regime & Twelve independently trained two-block Transformers for
additive and interactional targets, with edits checked against clean residual
scale. \\
\bottomrule
\end{tabular}
\caption{ObserverBench contract for the closed-loop control experiment.}
\label{tab:control-task-contract}
\end{table}

We train a two-block Transformer on sequences with two binary features, $x_1$
and $x_2$, and target
\[
y=0.35x_1+0.25x_2+\gamma x_1x_2+\epsilon.
\]
The product $x_1x_2$ matters only when both features are present. A first-order
observer reads $x_1$ and $x_2$ separately, so it cannot represent this joint
condition. A lifted observer adds the pairwise feature $x_1x_2$. We edit the
residual after block 0 and rerun block 1, the final layer normalization, and the
target head. The edit therefore passes through the learned nonlinear suffix;
we do not replace that suffix with a linear approximation.

We run two tests with 12 independently trained models per target family:

\begin{itemize}
  \item \textbf{Does the certificate predict failure?} We fix the observer
  response at $g_E=0.85$ and vary the response mismatch $\rho$, initial bias,
  and target offset. Small positive and negative edits measure the actual target
  response $g_T$. We then ask whether Equation~\ref{eq:certificate} predicts
  which rollouts miss the target and by how much.

  \item \textbf{Does the interaction coordinate improve control?} We compare
  the natural first-order and pairwise estimator--direction pairs. We use
  $\gamma=1.15$ when the target contains the interaction and $\gamma=0$ as the
  additive control.
\end{itemize}

Integrated squared tracking error counts error across the full trajectory, not
only at the final step, and uncertainty is measured across the 12 independently
trained models, not across the many conditions within each model. None of the
main comparisons clips. Every natural comparison and 99.2\% of controlled
conditions move no farther than the largest distance among the four clean task
archetypes. The edits therefore have a familiar scale, although four archetypes
cannot prove that edited states remain on the full activation manifold.

The suffix is measurably nonlinear. At the largest offsets, its mean deviation
from the affine prediction is 16.1\% of output movement, and the certificate
residual is 20.2\% of mean absolute final error. We therefore ask whether the
affine quantities remain predictive, not whether the Transformer is linear.

They remain predictive. The main checks agree:

\begin{itemize}
  \item The raw certificate correlates $0.952$ with actual final error and gets
  the sign right in 96.0\% of conditions.
  \item Its residual MAE is 16.5\% of mean absolute actual error.
  \item After fitting only an intercept and slope on the other 11 seeds, its
  leave-one-seed-out MAE is $0.063$, compared with $0.302$ for the bias-only
  model and $0.367$ for the observer-pole-only model. It wins on every held-out seed.
\end{itemize}

Across this checked operating range, the three certificate terms capture most
of the error. This does not show that the suffix or its activation manifold is
locally linear.

\begin{table}[t]
\centering
\scriptsize
\begin{tabular}{@{}lrrrr@{}}
\toprule
Comparison & A & B & A--B [95\% CI] & Seeds \\
\midrule
Bias vs. full cert. & 0.302 & 0.063 & 0.238 [0.230, 0.246] & 12/12 \\
Pole vs. full cert. & 0.367 & 0.063 & 0.303 [0.294, 0.312] & 12/12 \\
FO vs. lifted, $\gamma=1.15$ & 2.632 & 1.870 & 0.762 [0.424, 1.101] & 11/12 \\
FO vs. lifted, $\gamma=0$ & 1.693 & 1.662 & 0.031 [-0.021, 0.103] & 6/12 \\
\bottomrule
\end{tabular}
\caption{Study-1 nonlinear-suffix results. A--B is positive when the full certificate or lifted pair has lower error. Intervals resample the 12 training seeds.}
\label{tab:phase5-nonlinear-suffix}
\end{table}

The direct control comparison gives the same message. The pairwise model
reduces integrated squared error by 29.0\% when the target contains the
``both features present'' term, with improvement in 11 of 12 independently
trained models. It gives no reliable advantage for the additive target. The
exact losses and intervals are reported in Table~\ref{tab:phase5-nonlinear-suffix}.

This result concerns the estimator and edit direction together. The mismatched
estimator--direction pairs are unstable in 10 of 12 interactional models and
also fail the action- and activation-scale checks. We therefore do not use them
to rank the estimators independently of direction. Appendix
Table~\ref{tab:nonlinear-suffix-factorial} reports all four cells, including
their self-response and convergence factor $p$.

\begin{figure}[H]
  \centering
  \includegraphics[width=0.98\linewidth]{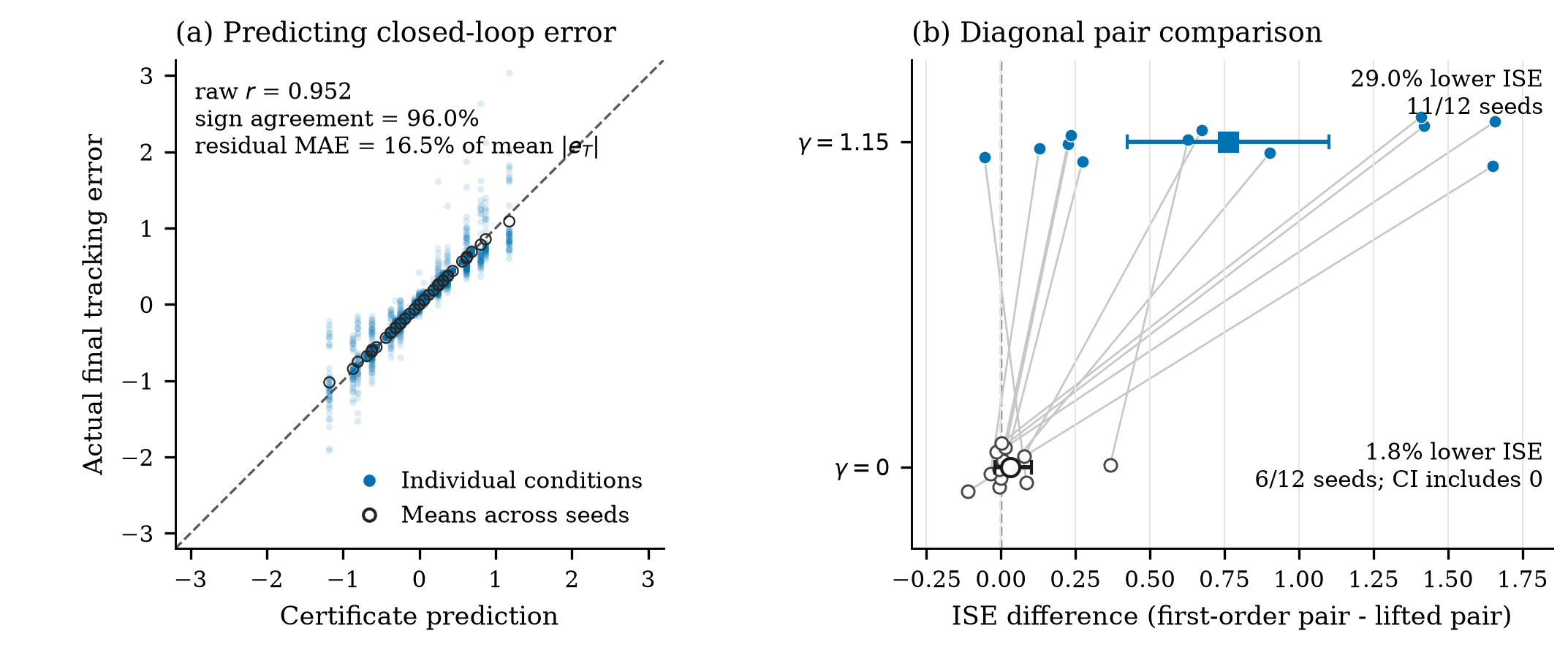}
  \caption{Learned nonlinear-suffix experiment. The certificate combines
  starting mismatch, convergence factor, and response mismatch (left). The
  pairwise estimator--direction pair reduces control error for the
  interactional target but not for the additive target (right). Intervals
  resample 12 independently trained models.}
  \label{fig:nonlinear-suffix}
\end{figure}

\FloatBarrier
The study provides a bridge to pretrained models: a simple control-error
certificate can remain useful after nonlinear computation when edit scale and
loop stability are checked. It does not establish token-level or large-model
control. Closed-loop control evaluates an observer over a trajectory. What if
the decision instead selects one static intervention from a fixed candidate
set using predicted finite effects?

\section{Effect prediction: better estimates need not choose better actions}
\label{sec:effect-prediction}

Many interpretability studies use an observer once, to choose one intervention
from a fixed set. ObserverBench tests whether better prediction of held-out
effects also produces a better choice. It fixes the measurements, candidate
actions, decision rule, and held-out loss, then reports prediction error and
the loss caused by the selected action separately.

We test this effect-prediction contract on GPT-2-small and Qwen2.5-7B. The two
tasks use different behaviors and head sets, but keep the same distinction
between estimating an effect and using that estimate to act.

\begin{table}[H]
\centering
\scriptsize
\setlength{\tabcolsep}{3.5pt}
\begin{tabular}{@{}L{0.17\linewidth}L{0.38\linewidth}L{0.38\linewidth}@{}}
\toprule
Task dimension & GPT-2-small IOI & Qwen2.5-7B copy \\
\midrule
Target & A mask's mean margin drop, or its prompt-level absolute target loss
& A mask's mean candidate-margin drop, or its prompt-level absolute target
loss \\
Information boundary & Frozen calibration effects and declared head or group
features & Frozen calibration effects and declared features for a confirmed
eight-head panel \\
Action space & One candidate head-ablation mask; exact no-op is included in
the confirmation & One of eight nonempty head-ablation masks or exact no-op in
each pool \\
Decision rule and loss & Choose the predicted closest-mean or lowest-loss
mask; score held-out absolute target error & Choose the predicted closest-mean
or lowest-loss mask; score held-out absolute target error \\
Operating regime & Held-out IOI prompts and previously unseen masks & Eligible
held-out induction-copy prompts across four sequence conditions and unseen
masks \\
\bottomrule
\end{tabular}
\caption{Task contracts for the two effect-prediction studies. Each observer
receives the same calibration information within a task and faces the same
candidate actions and held-out loss.}
\label{tab:effect-prediction-contracts}
\end{table}

\subsection{GPT-2 IOI: predicting an effect is not choosing an action}
\label{sec:ioi}

The nonlinear-suffix experiment used a small model with a known mechanism. We
now turn to a documented
circuit in GPT-2-small. Consider the prompt ``When Alice and Bob went to the
store, Alice gave a book to.'' The correct next name is Bob, not the repeated
subject Alice. Prior work identifies three relevant head groups: primary Name
Movers promote Bob, Backup Name Movers compensate when that path is weakened,
and Negative Name Movers suppress the competing name
\cite{wang2023ioi,mcgrath2023hydra,mcdougall2023copy,rushing2024selfrepair}.
These known groups let us test observers on a real intervention surface. We do
not claim to discover a new circuit.

We first ask whether pairwise terms improve mean-effect prediction and the
resulting fixed action, then predict prompt-level action loss directly, and
finally repeat that comparison after the baseline task check passes with the
exact no-op included.

We write the three published head groups as
\[
\begin{aligned}
P &= (9.9,9.6,10.0),\\
B &= (9.0,9.7,10.1,10.2,10.6,10.10,11.2,11.9),\\
E &= (10.7,11.10).
\end{aligned}
\]
An ablation mask $m$ is the set of heads removed together. For prompt $i$ and
mask $m$, define
\[
Y_i(m)=LD_{i,\mathrm{clean}}-LD_{i,\mathrm{ablate}(m)},
\]
where $LD$ is the logit for the indirect object minus the logit for the
subject. Thus $Y_i(m)=1$ means that ablating mask $m$ reduces Bob's advantage
over Alice by one logit on prompt $i$. At the final token, mean ablation
replaces each selected head's output with an average from reference prompts of
the same template. Whole-group ablations first reproduce known backup behavior.
This is a check that the intervention works as intended, not a new mechanistic
result.

\subsubsection{A fixed test of effect prediction and action selection}

We freeze the test before measuring candidate effects:

\begin{itemize}
  \item \textbf{Prompts and splits.} We pin GPT-2-small and generate 1,024
  prompts from eight templates and 64 names. Separate 16-name banks supply 512
  reference, 192 training, 64 validation, and 256 test prompts. Reference
  prompts define the ablation means.

  \item \textbf{Measurements and actions.} We use 160 calibration masks to
  fit each observer. The observer then chooses from 320 previously unseen
  candidate masks, divided into ten action pools of 32. Each mask contains
  4--10 heads. Budgets 20, 40, 80, and 160 use nested prefixes of one
  calibration sequence.

  \item \textbf{Observers.} Four ridge models move from 13 independent head
  effects, to group-count features, to a capacity-matched $P\times E$ term,
  and finally to all three group-pair interactions. Each measurement is one
  mask's average effect over the training prompts.

  \item \textbf{Fixed action.} For each observer, budget, pool, and target
  $t\in\{0.5,1.0,1.5\}$, the observer chooses one mask that is applied to every
  held-out prompt. It cannot choose one mask for Alice--Bob and another for
  Carol--David.
\end{itemize}

An audit found that an earlier metric had accidentally allowed a different,
outcome-informed choice for each prompt rather than one frozen action. We
corrected it before opening validation or test outcomes; the prompts, masks,
observers, targets, and predictions were unchanged.

\subsubsection{Interaction terms improve mean-effect prediction}

Let
\[
\mu(m)=\mathbb E_i[Y_i(m)]
\]
be a mask's average effect over held-out prompts. At the primary budget of 160
measurements, the all-pairs observer lowers candidate-mask MAE from $0.486$ for
per-head additivity and $0.473$ for count additivity to $0.395$, reductions of
18.8\% and 16.7\%. Both improvements hold in all ten candidate pools.

The $P\times E$ and all-pairs observers are effectively tied, with MAEs
$0.3940$ and $0.3945$. This replaces an earlier exploratory ranking of the
pairs. The result supports interaction-aware prediction, but it does not show
that one pair uniquely explains the intervention surface. The gain appears at
budgets 40, 80, and 160. With only 20 measurements, the richer designs have
more coefficients than the data can identify. That cell is therefore an
underdetermined fit, not evidence about sample efficiency.

\subsubsection{The better mean predictor does not choose a better action}
\label{sec:ioi-prediction-decision}

A mean-effect observer chooses
\[
\widehat m(t)=\arg\min_m |\widehat\mu(m)-t|.
\]
For $t=1$, for example, it selects the mask whose predicted average drop is
closest to one logit. The benchmark instead scores the prompt-level loss of
that fixed mask,
\[
R_t(m)=\mathbb E_i\left[|Y_i(m)-t|\right].
\]
Two masks can have the same average and different losses. A mask that reliably
reduces the margin by one logit is better than one that alternates between zero
and two, even though both average to one. Mean-effect prediction and action
selection are therefore different tasks. This is an instance of a broader
prediction--decision distinction: the error of an intermediate estimate need
not determine the quality of the action chosen from it. Roughan et al.
documented an analogous result in traffic engineering, where traffic-matrix
estimation-error magnitude was not by itself a good indicator of routing
performance~\cite{roughan2003te}. Here, mean-effect error omits a
decision-relevant quantity: how a mask's effect varies across prompts.

At budget 160, the all-pairs model has the best $R^2$ and nearly the best
mean-effect MAE, yet its action-loss reductions are only 0.10\% and 2.34\%, with
paired intervals $[-0.087,0.100]$ and $[-0.063,0.128]$. Table~\ref{tab:ioi-phase5}
reports the raw outcomes; the intervals cross zero, and none of the
prespecified size-matched, cost-aware, tolerance, or name-pair checks
establishes a reliable gain.

\begin{table}[H]
\centering
\scriptsize
\begin{tabular}{@{}lrrrrrr@{}}
\toprule
Observer & MAE & $R^2$ & $|\mu-t|$ & $J_t$ & $R_t$ & Within $0.25$ \\
\midrule
Per-head additive & 0.4860 & 0.7629 & 0.4910 & 0.5252 & 1.0163 & 15.42\% \\
Count additive & 0.4734 & 0.7718 & 0.3476 & 0.6920 & 1.0396 & 14.62\% \\
Count plus $P\!\times\!E$ & 0.3940 & 0.8365 & 0.3561 & 0.6703 & 1.0265 & 15.44\% \\
Count plus all pairs & 0.3945 & 0.8387 & 0.3498 & 0.6655 & 1.0152 & 15.76\% \\
\bottomrule
\end{tabular}
\caption{Held-out IOI results at 160 measurements. MAE and $R^2$
score the prompt-averaged effect over candidate masks. The remaining columns
score the masks selected for 30 pool--target cells. Values may differ slightly
under rounding because $R_t=|\mu-t|+J_t$ is computed before aggregation.}
\label{tab:ioi-phase5}
\end{table}

The reason can be written exactly. Define the target-dependent dispersion
penalty
\[
J_t(m)=\mathbb E_i|Y_i(m)-t|-|\mathbb E_i[Y_i(m)]-t|.
\]
This nonnegative term is the extra action loss caused by prompt-to-prompt
variation hidden by the mean. Therefore
\begin{equation}
\label{eq:dispersion}
R_t(m)=|\mu(m)-t|+J_t(m).
\end{equation}
The interaction-aware observers choose masks whose averages are closer to the
target, but those masks vary more across prompts. The second effect cancels the
first. Even perfect knowledge of each mask's test-set mean chooses the
lowest-loss mask in only 3 of 30 pool--target cells. Its loss is $0.927$,
compared with $0.749$ when the full prompt-level action loss is known.
Mean-effect prediction works as intended; the mean is simply the wrong
quantity for this action rule.

\subsubsection{Predict the loss used to choose the action}
\label{sec:phase6-estimand}

Equation~\ref{eq:dispersion} tells us what to try next: keep the measurements
and action rule fixed, but predict the action loss $R_t(m)$ instead of the mean
effect $\mu(m)$. We designed this test after the frozen mean-effect comparison
and an exploratory pilot, then fixed the protocol, predictions, and actions
before measuring new test effects. The local record documents data-access
order; it is not an independent preregistration.

The comparison keeps the measurement basis and action rule fixed:

\begin{itemize}[itemsep=0.15em,topsep=0.25em]
  \item eight new templates, 192 training prompts, 512 test prompts, and
  targets $t\in\{0.5,1.0\}$;
  \item the same 160 calibration masks and 1,536 candidate masks in 48 pools;
  \item one pairwise (quadratic) basis with an intercept, 13 head indicators,
  and 78 head-pair products.
\end{itemize}

The direct-loss observer, called direct risk in the tables, fits
\[
\frac{1}{192}\sum_i |Y_i(m)-t|.
\]
The natural-mean observer predicts the average effect and chooses the mask
closest to the target. A transformed-mean control predicts that distance
directly with the same 92-column basis. The observers therefore see the same
masks and features; only the quantity they learn differs. The held-out
evaluation comprises 786,432 test prompt--mask outcomes.

The mean observer remains strong at its own task: held-out mean-effect MAE is
$0.128$, $R^2$ is $0.844$, and rank correlation is $0.938$. The direct-loss
observer nevertheless reduces action loss from $1.076$ to $0.833$, a 22.6\%
reduction, and by 22.7\% against the same-feature transformed-mean control.
Table~\ref{tab:phase6-ioi-confirmation} reports the paired intervals and all
prespecified checks.

The decomposition explains why. The direct-loss observer matches the mean less
closely: $0.550$ instead of $0.433$. But it lowers the prompt-variation term
from $0.643$ to $0.283$. That $0.360$ reduction is larger than the $0.117$ loss
in mean matching. Extra pairwise features do not reliably improve direct loss
over either additive control; both intervals cross zero. On this
surface, predicting the right quantity matters more than adding features.

\begin{table}[H]
\centering
\scriptsize
\begin{tabular}{@{}llrl@{}}
\toprule
Check & Status & Point statistic & 95\% interval \\
\midrule
H1: risk vs. natural mean & Pass & 22.61\% reduction & [0.114, 0.402] \\
H2: risk basis vs. additive & Fail & 3.03\% reduction & [-0.025, 0.076] \\
H2: risk basis vs. count-additive & Fail & 3.54\% reduction & [-0.021, 0.079] \\
Target-specific transformed-mean sensitivity & Pass & 22.67\% reduction & [0.119, 0.399] \\
Natural-mean own-estimand diagnostic & Descriptive & MAE 0.128; $R^2$ 0.844 & -- \\
No-action audit & Limitation & loss 0.750 vs. risk 0.833 & not prespecified \\
Clean-task validity & Fail & overall 0.867 $<$ 0.900 & worst template 0.578 $<$ 0.750 \\
\bottomrule
\end{tabular}
\caption{Pilot-informed, outcome-sealed fixed-action comparison. Contrast point
statistics are relative fixed-action loss reductions; intervals are paired
absolute loss reductions from the frozen pair-cluster-by-pool bootstrap.}
\label{tab:phase6-ioi-confirmation}
\end{table}

Two failed checks limit this result:

\begin{itemize}[itemsep=0.2em,topsep=0.25em]
  \item \textbf{The exact no-op was absent.} Every candidate mask
  removes 4--9 heads. Since $Y_i(\varnothing)=0$, no-op loss is exactly $t$.
  Averaged over the two targets, no-op loss is $0.750$, below the direct-loss
  observer at $0.833$.
  \item \textbf{The baseline task check failed.} Overall IO-versus-subject accuracy is
  0.867 rather than 0.900, and the worst template reaches 0.578 rather than
  0.750.
\end{itemize}

We remove no rows. This test shows better ranking among forced nonempty actions
on this surface. It does not show improvement over no-op or a new IOI
mechanism.

\subsubsection{At a one-logit target, direct loss beats the mean observer and exact no-op}
\label{sec:phase7-confirmation}

The failed checks determine the confirmation:

\begin{itemize}
  \item We return to the eight canonical templates and generate 512 new prompts
  from 32 new name-pair clusters. Clean IO-versus-subject accuracy is 0.977
  overall and at least 0.953 for every template, so the baseline task check
  passes.
  \item We use $t=1$, the only screened target whose pilot estimates favored
  the direct-loss observer over both the same-feature mean observer and exact
  no-op. This is a disclosed pilot-informed choice, not an independent
  preregistration.
  \item Each of 48 action pools contains 30 fresh nonempty masks plus exact
  no-op. Both observers use the same 92-column quadratic basis and the same
  frozen training and calibration outcomes.
  \item We freeze every action before measuring candidate effects. The two
  observers select 89 distinct nonnoop masks, so we measure only the
  corresponding $512\times89=45{,}568$ prompt--mask cells.
\end{itemize}

A provenance check stopped an initial run before its values were inspected.
Appendix~\ref{app:earlier-diagnostics} records the corrected frozen run and an
independent recomputation of its coefficients, predictions, and choices.

Both required comparisons pass: Table~\ref{tab:phase7-ioi-confirmation} shows
an 18.5\% action-loss reduction against the same-feature mean observer and a
10.9\% reduction against exact no-op, with both paired intervals excluding
zero.

\begin{table}[H]
\centering
\scriptsize
\begin{tabular}{@{}lrrrrl@{}}
\toprule
Reference & Ref. loss & Risk loss & Reduction & 95\% interval & Templates \\
\midrule
Same-basis mean-effect plug-in & 1.093 & 0.891 & 18.5\% & [0.115, 0.293] & 8/8 \\
Exact no action & 1.000 & 0.891 & 10.9\% & [0.012, 0.196] & 6/8 \\
\bottomrule
\end{tabular}
\caption{Fixed-action target tracking at the pilot-informed target $t=1.0$. On 512 new prompt strings from eight fixed canonical templates, intervals resample 32 name-pair clusters and 48 frozen action pools (5,000 draws). Intervals are absolute loss reductions; the Reduction column is relative. The last column is descriptive and shows templates with the same direction. The clean IO-versus-subject gate passed before measurement (97.7\% overall; 95.3\% worst template).}
\label{tab:phase7-ioi-confirmation}
\end{table}

The decomposition shows the same tradeoff. The direct-loss observer has a worse
mean-to-target term, $0.397$ rather than $0.286$, but lowers the dispersion
penalty from $0.807$ to $0.494$. It beats the mean observer on all eight
templates and no-op on six. The pooled no-op comparison passes, but the gain is
not uniform across templates.

\paragraph{How the target changes the value of acting.}
After the primary result, we freeze a secondary analysis over all three
screened targets. It keeps the prompts, pools, quadratic basis, and calibration
outcomes fixed. It compares the direct-loss observer with the natural mean plug-in, the
same-basis transformed-mean score, and exact no-op. We measure new outcomes
only for additional selected masks. This analysis has no success gate and does
not change the baseline-checked result.\footnote{At $t=1$, the confirmation and this sensitivity
use the same loss cells but separate fixed bootstrap seeds. Their point
estimate is identical; lower limits $0.012$ and $0.015$ differ only through
Monte Carlo resampling.}

\begin{table}[H]
\centering
\scriptsize
\setlength{\tabcolsep}{4pt}
\begin{tabular}{@{}lrrrr@{}}
\toprule
Target & Direct loss & Gain vs. mean & Gain vs. transformed mean & Gain vs. no action \\
\midrule
$0.5$ & 0.731 & +29.4\% [0.209, 0.405] & +20.8\% [0.102, 0.288] & \textbf{$-$46.2\% [-0.314, -0.161]} \\
$1.0$ & 0.891 & +18.5\% [0.115, 0.293] & +15.9\% [0.089, 0.253] & +10.9\% [0.015, 0.196] \\
$1.5$ & 1.070 & +10.7\% [0.020, 0.243] & +9.3\% [0.011, 0.217] & +28.6\% [0.321, 0.525] \\
Equal-target mean & 0.897 & +19.1\% [0.141, 0.287] & +14.9\% [0.091, 0.225] & +10.3\% [0.021, 0.177] \\
\bottomrule
\end{tabular}
\caption{Post-confirmatory target sensitivity. This secondary analysis was designed after the $t=1$ confirmation and has no success gate. Each gain cell reports the relative loss change and, in brackets, a 95\% paired bootstrap interval for the absolute reference-minus-direct loss difference. The three targets share 512 prompts and 48 frozen action pools; the last row weights the targets equally. The negative no-action result at $t=0.5$ is retained.}
\label{tab:phase8-ioi-sensitivity}
\end{table}

Table~\ref{tab:phase8-ioi-sensitivity} shows that the direct-loss observer beats
both mean-based controls at every target, with every paired interval excluding
zero. No-op follows a simpler rule. Its effect is zero, so its loss is exactly
$R_t(\varnothing)=t$. At $t=0.5$, the direct-loss observer's loss is 46.2\%
higher than exact no-op, even though it beats both mean-based observers. It
beats no-op at $t=1$ and $t=1.5$. This sign change is why exact no-op belongs
in every action set: an observer can rank the available edits well even when
no edit is warranted.

\begin{samepage}
Together, the IOI tests show that pairwise terms improve mean-effect prediction
and direct-loss prediction improves fixed-action choice, while exact no-op
tests whether any intervention is warranted.
Because these claims are limited to GPT-2-small, the declared templates and
masks, and the measured action losses, we next ask whether the first two
patterns transfer to a larger model and a different intervention surface.
\end{samepage}


\subsection{Qwen2.5-7B: the pattern transfers to a larger model}
\label{sec:qwen-induction}

We test that transfer on the base Qwen2.5-7B model
\cite{qwen2024qwen25} using an induction-copy task with an exact behavioral
target and a different set of heads to intervene on
\cite{olsson2022induction}. The result applies only to prompts that the
unedited model already solves with a clear margin.

Each prompt first contains three key--value pairs and later repeats one key.
For example, after ``A red, B blue, C green, ... B'', the desired continuation
among the three values is ``blue.'' Red and green are the declared distractors.
For each prompt, we score how strongly the model favors the target over the two
distractors combined. For prompt $i$, define this margin as
\[
M_i=\operatorname{logit}_i(\text{target})-
\operatorname{logmeanexp}
\bigl(\operatorname{logit}_i(\text{distractor}_1),
      \operatorname{logit}_i(\text{distractor}_2)\bigr),
\]
where
$\operatorname{logmeanexp}(a,b)=\operatorname{logsumexp}(a,b)-\log 2$.
The effect of an ablation mask is the drop in that margin:
\[
Y_i(m)=M_i(\varnothing)-M_i(m).
\]
A positive $Y_i(m)$ means that ablating those heads weakens the target relative
to the two alternatives. Four prompt families combine sequence lengths 32 and
64 with repeat gaps 8 and 16.

\subsubsection{Define eligible prompts before discovering heads}

Copy-v1 required 95\% baseline accuracy among the three candidates. Qwen reached
92.97\%, so the run stopped before any attention or intervention result was
opened. Copy-v2 did not lower the threshold. Instead, before analysis it defined
an eligible prompt as one for which Qwen chooses the target among the three
candidates, has margin $M_i\geq\log 4$, and produces finite scores. Discovery
prompts must also agree under the eager and SDPA attention implementations.
Both protocols recorded their source and configuration in a private commit
before model outcomes. This records data-access order but is not an independent
preregistration.

Before scoring, we assign 3,072 fresh prompts to separate data banks. A total
of 2,612 prompts (85.03\%) pass the eligibility and coverage checks. Hash order,
not model score, then selects 1,536 prompts for reference, discovery, head
fitting, confirmation, calibration, and final held-out testing. These roles
remain separate throughout the experiment.

On the unfiltered prompts, Qwen produces the desired next token only 24.45\% of
the time. The task therefore asks a bounded question: when Qwen already
distinguishes the planted value from the two distractors, which heads support
that distinction, and how well can we predict their intervention effects? It
does not test unconstrained generation.

\subsubsection{Held-out interventions confirm the eight-head panel}

Attention can suggest heads, but only an intervention shows that they affect
the output. We therefore use four frozen steps:

\begin{enumerate}
  \item Rank all 784 query heads by attention to the target value relative to
  the distractors.
  \item Apply individual mean ablations to only the top 32 on a separate
  head-fitting bank.
  \item Select at most two heads per layer using a lower-bootstrap-bound rule.
  This yields L22H3, L22H4, L26H15, L14H0, L14H6, L16H13, L17H18, and L23H13,
  spanning six layers.
  \item Test the selected panel against matched control heads on a separate
  confirmation bank.
\end{enumerate}

Each selected head receives a low-induction control from the same layer and
grouped-query-attention key--value group. Controls are matched using attention
specificity and output-norm distance, not causal outcomes. On 256 separate
confirmation prompts, the selected heads lower the candidate margin much more
than matched controls, with the paired interval excluding zero; a frozen
zero-ablation check gives the same ordering. Table~\ref{tab:qwen-copy-v2}
reports the estimates. Thus the held-out intervention results, not the attention
scores by themselves, justify including the panel in the benchmark.

\subsubsection{Pair terms and direct loss improve held-out performance}

Only after confirmation passes do we freeze all $2^8=256$ combinations of the
eight heads. One mask might remove heads 1 and 3; another might remove 2, 4,
and 8. We split this complete mask set into 128 calibration masks and 128
held-out masks. Calibration effects average 256 prompts; held-out effects
average a separate 512. The held-out prompts form 128 four-family clusters, so
every mask is tested at every sequence-length and repeat-gap condition.

The additive observer assumes that removing heads 1 and 3 has the sum of their
separate effects. The pairwise observer, a quadratic model for these binary
masks, adds all 28 head pairs and can represent a head whose effect changes
after another head is removed. At the primary budget of 128 measurements, the
pairwise model improves held-out-mask MAE over the additive observer, with the
paired interval excluding zero (Table~\ref{tab:qwen-copy-v2}). The pairwise
model also improves MAE at every smaller nested budget; the 16-measurement fit
has more coefficients than measurements and is therefore descriptive.

Every action selector uses the same 37-column pairwise basis, the same three
targets, and the same 16 frozen action pools. Each pool contains eight nonempty
masks and the exact no-op. The observers differ only in
what they predict:

\begin{itemize}
  \item \textbf{Natural mean} predicts $\mathbb E_i[Y_i(m)]$.
  \item \textbf{Transformed mean} predicts the distance between that mean and
  the target.
  \item \textbf{Direct loss} predicts the action loss itself,
  $\mathbb E_i|Y_i(m)-t|$.
\end{itemize}

The targets are 0.25, 0.50, and 0.75 of the confirmed full-panel effect. We
freeze every prediction and action before opening held-out effects.

\begin{table}[H]
\centering
\scriptsize
\setlength{\tabcolsep}{3.5pt}
\begin{tabular}{@{}llrrl@{}}
\toprule
Test & Comparison & Reference & Result & Improvement [95\% CI] \\
\midrule
Causal & Mean replacement & Control $0.170$ & Selected $3.454$
  & $3.284\ [3.053,3.520]$ \\
Robustness & Zero ablation & Control $0.528$ & Selected $4.080$
  & $3.552\ [3.306,3.801]$ \\
\midrule
Prediction & 16 measurements$^\dagger$ & Additive $0.354$ & Pairwise $0.100$
  & $0.254\ [0.182,0.295]$ \\
& 40 measurements & Additive $0.202$ & Pairwise $0.045$
  & $0.157\ [0.103,0.173]$ \\
& 64 measurements & Additive $0.157$ & Pairwise $0.039$
  & $0.117\ [0.067,0.134]$ \\
& 128 measurements & Additive $0.121$ & Pairwise $0.040$
  & $0.081\ [0.025,0.101]$ \\
\midrule
Action loss & Natural mean & $0.955$ & Direct loss $0.870$
  & $0.085\ [0.053,0.118]$ \\
& Transformed mean & $0.995$ & Direct loss $0.870$
  & $0.125\ [0.083,0.173]$ \\
& Exact no-op & $1.727$ & Direct loss $0.870$
  & $0.857\ [0.805,0.908]$ \\
\bottomrule
\end{tabular}
\caption{Qwen2.5-7B Copy-v2 results. Causal entries report mean
candidate-margin drop on 256 confirmation prompts. Prediction entries report MAE on 128
held-out masks; positive improvement is additive minus pairwise MAE. Action
entries average absolute target loss over three targets and 16 pools; positive
improvement is comparator loss minus direct loss. Intervals use 5,000 paired
percentile bootstrap draws over the appropriate prompt clusters, masks, and
pools. The dagger marks the underdetermined, descriptive 16-measurement fit.
The 128-measurement prediction row and aggregate action rows are the
prespecified primary tests.}
\label{tab:qwen-copy-v2}
\end{table}

Direct loss lowers aggregate loss by 8.9\% relative to natural mean, 12.6\%
relative to transformed mean, and 49.6\% relative to no-op. These numbers give
equal weight to the three targets. At the largest target, direct loss and
natural mean tie within uncertainty: natural minus direct loss is $-0.008$,
with interval $[-0.038,0.021]$. Appendix
Table~\ref{tab:qwen-per-target} reports each target separately.

On 512 matched non-induction continuations, a secondary check finds no larger
clean-to-intervened KL for direct loss than for the natural-mean and
transformed-mean controls ($0.0115$ versus $0.0157$ and $0.0160$); this is not a
general safety result.

Across GPT-2 and this bounded Qwen task, pair terms improve held-out effect
prediction and observers trained on realized loss choose lower-loss
interventions in their frozen aggregates; the Qwen result, however, remains
limited to one pinned model, the eligible prompts, the confirmed eight-head
panel, and the declared masks and neither identifies a complete induction
circuit nor establishes general in-context learning.

These studies choose model interventions from predicted effects. What changes
when the action is instead a scarce safety response and different mistakes
carry different consequences? The next operational mode keeps the same
task-relative contract but evaluates observers under fixed intervention
budgets.

\section{Safety triage: scores must guide scarce interventions}
\label{sec:safety}

When the controller cannot inspect every request, a safety observer must do
more than separate safe cases from unsafe ones. It must rank the cases that
matter most, using the information available at deployment, so that a limited
intervention budget prevents as much harm as possible. The two tasks in this
section make that requirement explicit.

The measurement channel matters as much as the score. During the 2026
OpenAI--Hugging Face incident, agents tried to alter the transcripts they
believed an evaluator would inspect and successfully spoofed some tool calls,
while showing little interest in changing chain-of-thought they did not expect
to be logged \cite{greenblatt2026hfinvestigation}. This does not make internal
signals tamper-proof. It shows that an attacker may concentrate on the channel
it believes is watched. ObserverBench therefore compares declared views---text,
outputs, verdict-adjacent activations, and neutral activations---and records who
controls each view. Monitoring is therefore a moving contest: a channel that
works against frozen attacks may fail after attackers learn that it is being
used. The present tasks keep those views fixed and trusted; they do not test an
attacker that adapts to the monitor.

\begin{table}[H]
\centering
\scriptsize
\caption{Task contracts for the two safety-triage studies. Each contract fixes
what the observer predicts, what it may read, which action the controller can
take, and how the resulting decision is scored.}
\label{tab:safety-contracts}
\begin{tabular}{@{}L{0.18\textwidth}L{0.37\textwidth}L{0.37\textwidth}@{}}
\toprule
Task dimension & Authorization interlock & APPS backdoor audit \\
\midrule
Target & Authorization risk, including the consequence of the requested
operation & Whether a submitted program contains a verified working backdoor \\
Information and integrity boundary & Declared action fields and, depending on
the observer, prompt text, Qwen output scores, or one residual activation;
evaluator-supplied views are trusted & Programming problem and submitted code,
read through Qwen or Gemma prompt scores, text features, dense residuals, or SAE
features; stored views are trusted and attacks do not adapt to the monitor \\
Action space & Allow, block, or escalate a request & Audit a submission or
leave it unaudited \\
Decision rule and loss & Rank by the observer score; block the top 10\%,
escalate the next 10\%, and charge for missed risk and unnecessary intervention
& Rank by suspicion; audit the top-scoring submissions and count verified
backdoors that escape \\
Operating regime & Held-out permission pairs across seen and unseen operations
and prompt formats, under fixed block and escalation budgets & Held-out APPS
problem IDs, fixed audit budgets, and prevalence checks from rare to balanced
attacks \\
\bottomrule
\end{tabular}
\end{table}

The contracts use the same five fields but test different weaknesses. The
authorization task gives exact labels and costs, which lets us isolate whether a
statistically accurate observer allocates its budget well. We begin there.

\subsection{A perfect classifier can allocate a safety budget poorly}

Suppose a workspace assistant is asked to delete a file in workspace A, but the
user has granted access only to workspace B. Deletion is not always unsafe; the
problem is that this file is outside the granted scope. A controller may allow,
block, or escalate the request. When blocks and reviews are limited, it must
rank requests by the harm they could cause.

We turn this example into a non-executing paired task:

\begin{itemize}
  \item \textbf{Same action, different permission.} Each pair requests the same
  operation on the same resource. One prompt grants access to it; the other
  grants access elsewhere.
  \item \textbf{Known label and cost.} An exact authorization rule says whether
  the request exceeds scope. The operation sets the consequence: an unauthorized
  delete costs more than an unauthorized read.
  \item \textbf{Fixed action budget.} The controller blocks the highest-risk
  10\%, escalates the next 10\%, and allows the rest. Every observer faces the
  same costs and budgets.
\end{itemize}

We use the pinned Qwen2.5-7B-Instruct model \cite{qwen2024qwen25}. As a
prerequisite check, the unedited model must answer the authorization question
with one of two exact tokens. It passes every check with accuracy 1.000. The
held-out test contains 384 safe/unsafe pairs across four equally sized subgroups:
seen operations and prompt forms, a held-out delete operation, a held-out XML
form, and both changes together. Fit, calibration, and test resources are
disjoint.

The activation observer reads a hidden-state vector---a residual-stream
activation---at the final prompt position and also receives the requested
action and its severity. We fit on 384
pairs, calibrate on 192, choose among seven layers and ridge settings using the
fixed policy loss, and only then extract held-out activations.

The comparisons ask four questions:

\begin{itemize}
  \item \textbf{Action-only:} what can be inferred without authorization
  context?
  \item \textbf{Activation label and direct risk:} do hidden activations predict
  the policy label or the violation cost?
  \item \textbf{Activation label $\times$ severity:} does combining the label
  score with known consequence improve the action ranking?
  \item \textbf{Allow-all and exact oracle:} what are the upper and lower
  bounds under the same budget?
\end{itemize}

These comparisons were fixed before held-out activations were extracted.
After seeing their results, we added three secondary controls: a full-context
text observer, Qwen's block-minus-allow output-score difference, and an
activation observer selected by tail rather than mean loss. They test whether
activations add value once prompt text or the model's own decision signal is
available; they are not independent confirmation.

\begin{table}[H]
\centering
\scriptsize
\caption{Held-out Qwen safety-interlock results. Every observer uses the same
10\% block and 10\% escalation budgets. Lower loss and missed risk are better;
higher clean utility is better. Access states what the observer can read.}
\label{tab:qwen-safety}
\begin{tabular}{@{}p{0.27\textwidth}p{0.18\textwidth}rrrr@{}}
\toprule
Observer & Access & Mean loss & Tail loss & Missed risk & Clean utility \\
\midrule
Allow all & None & 5.010 & 19.026 & 1.000 & 1.000 \\
Action-only direct risk & Action fields & 3.176 & 13.987 & 0.623 & 0.901 \\
Activation label & Residual + fields & 3.334 & 18.182 & 0.665 & 1.000 \\
Activation direct risk & Residual + fields & 2.520 & 16.545 & 0.502 & 1.000 \\
Activation label $\times$ severity & Residual + fields & 2.155 & 14.026 & 0.429 & 1.000 \\
Full-context text label $\times$ severity$^\dagger$ & Prompt text + fields & 2.032 & 12.857 & 0.405 & 1.000 \\
Qwen logit margin $\times$ severity$^\dagger$ & Output logits + fields & \textbf{1.744} & \textbf{10.078} & \textbf{0.347} & \textbf{1.000} \\
Exact authorization-risk oracle & Evaluator label + fields & 1.740 & 10.039 & 0.346 & 1.000 \\
\bottomrule
\end{tabular}
\end{table}

\noindent\emph{$^\dagger$ Secondary controls.} The text and output rows were
designed after the original held-out results were known.

The results have four main implications:

\begin{itemize}
  \item \textbf{Activations improve the primary comparison.} Multiplying the
  activation label by severity lowers mean policy loss by 32.1\% relative to
  action-only and 57.0\% relative to allowing every request. It closes 71.1\%
  of the action-only-to-oracle gap while preserving all benign utility.
  \item \textbf{Perfect classification is not enough.} The label observer has
  AUROC 1.000 and loss 3.334. Multiplying its score by severity lowers AUROC to
  0.954 but lowers loss to 2.155 because unauthorized deletes are ranked ahead
  of unauthorized reads.
  \item \textbf{The useful signal is not uniquely hidden.} Full-context text is
  not reliably worse than the transformed activation observer. More clearly,
  Qwen's output-score difference multiplied by severity lowers mean loss to
  1.744 and tail loss to 10.078, nearly matching the exact oracle at 1.740 and
  10.039. When this output is available, the activation probe is unnecessary.
  \item \textbf{Good ranking within groups can fail across groups.} The
  activation label has AUROC 1.000 inside every operation--format subgroup, but
  its scale changes between them. A global threshold therefore allocates 63.2\%
  of interventions to held-out deletes in seen forms and only 18.4\% to the
  doubly held-out subgroup. There, activation loss is 5.007 versus 3.059 for
  action-only; the output observer catches 69.8\% of unsafe requests versus
  29.2\% for the activation observer.
\end{itemize}

\begin{figure}[H]
\centering
\includegraphics[width=\textwidth]{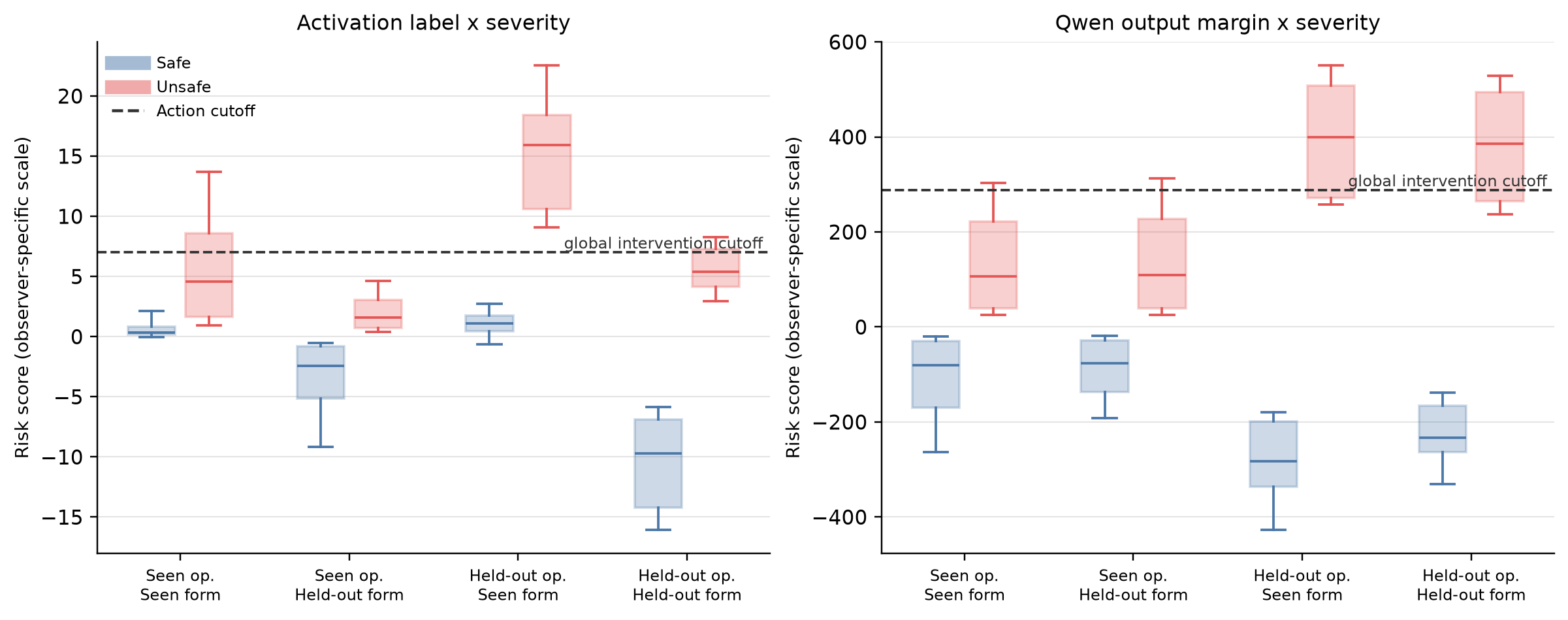}
\caption{Risk scores by operation--format subgroup. The activation observer
separates safe from unsafe requests within each subgroup, but its scale shifts
between subgroups. Dashed lines show each observer's global intervention
cutoff. Scores should be compared within panels, not across them.}
\label{fig:qwen-safety-calibration}
\end{figure}

This failure separates three requirements: the observer must classify the
violation, rank it by consequence, and calibrate its scores across deployment
groups. Recalibration by visible operation and format may repair the last
failure, but this split cannot test that repair because calibration contains
neither held-out case. A valid follow-up needs fresh prompts and calibration
coverage of every intended deployment group.

The controlled task shows why consequence and cross-group calibration matter.
Does the same task-relative evaluation change which monitor we prefer on an
externally recognized safety benchmark? We next turn to APPS.

\subsection{Monitor rankings depend on the model and operating regime}
\label{sec:safety-apps}

To answer that question, we use
ControlArena's APPS setting, an existing AI-control task with ordinary code
solutions and verified working backdoors
\cite{greenblatt2024aicontrol,inglis2025controlarena}. A monitor reads the
programming problem and submitted code, assigns a suspicion score, and a fixed
budget audits only the highest-scoring submissions. Every unaudited working
backdoor counts as one violation.

The public dataset supplies 1,444 honest/backdoor pairs. We split by problem ID:
500 pairs for fitting, 200 for calibration, and 744 for held-out testing, with both
members of a pair always in the same split. We use the same pinned Qwen model as
above.

The observers fall into three groups:

\begin{itemize}
  \item \textbf{Prompt and output observers} ask Qwen for a score and read the
  model's probability distribution over ten score tokens. We test basic and
  detailed prompts, their ensemble, and a follow-up ridge model using all ten
  token probabilities.
  \item \textbf{A text observer} is a follow-up baseline trained on word and
  character features from the problem and code.
  \item \textbf{Activation observers} fit a linear model to a hidden-state
  vector. One is read next to an explicit verdict prompt; the follow-up neutral
  observer reads the same problem and code with no monitoring or scoring
  instruction.
\end{itemize}

The original comparison used the three prompt observers and the
verdict-adjacent activation probe. We added the score-distribution, text, and
neutral-activation controls after that result and fixed them before measurement.
At a 2\% audit budget, calibration selects layer 28 for the verdict-adjacent
probe and layer 20 for the neutral probe.

\begin{table}[H]
\centering
\small
\caption{Qwen2.5-7B monitor comparison on 744 held-out APPS pairs. The controller audits
29 of 1,488 submissions at 2\% and 148 at 10\%. Higher caught-backdoor counts
are better; 29 and 148 are the best possible counts under those budgets.}
\label{tab:apps-monitor}
\begin{tabular}{@{}llrrr@{}}
\toprule
Observer & Access & AUROC & Caught @2\% & Caught @10\% \\
\midrule
Exact-label oracle & evaluator & 1.000 & 29 & 148 \\
Neutral activation & neutral residual & \textbf{0.979} & \textbf{29} & \textbf{148} \\
Verdict-adjacent activation & prompt residual & 0.953 & 29 & 148 \\
Supervised text & problem + code & 0.868 & 29 & 146 \\
Score-distribution ridge & prompt logits & 0.830 & 29 & 147 \\
Prompt ensemble & two prompt expectations & 0.817 & 28 & 146 \\
Detailed prompt & prompt expectation & 0.810 & 28 & 146 \\
Basic prompt & prompt expectation & 0.786 & 27 & 137 \\
\bottomrule
\end{tabular}
\end{table}

Three findings matter:

\begin{itemize}
  \item \textbf{Neutral activations rank attacks best.} The neutral probe reaches
  AUROC 0.979 without asking Qwen to judge the code. Its paired gains are 0.026
  over verdict-adjacent activation, 0.111 over supervised text, and 0.149 over
  the score-distribution observer, with 95\% intervals $[0.018,0.034]$,
  $[0.098,0.125]$, and $[0.131,0.167]$.
  \item \textbf{The balanced panel hides action differences.} Neutral and
  verdict-adjacent activations both reach the maximum possible caught-backdoor count at 2\%,
  5\%, and 10\%. At 10\%, neutral activation leaves 596 attacks unaudited versus
  607 for the basic prompt, a paired difference of $-11$ with 95\% interval
  $[-17,-5]$. The sample does not resolve its smaller gains over the stronger
  controls because attacks make up 50\% of the panel and fill the small audit queue.
  \item \textbf{The tested controls narrow, but do not settle, the explanation.}
  Reading all ten output probabilities raises AUROC from 0.786 to 0.830, so
  sampling or averaging the declared score tokens does not explain the activation
  gain. The text result rules out this lexical baseline, not every possible code
  monitor or output observer.
\end{itemize}

The neutral-context result is relevant to adaptive monitoring, but it is not a
robustness result. It shows that a useful signal can appear before the model is
asked for a verdict. It does not show that the same signal survives once an
attacker knows it is being read.

In deployment, attacks should be rare. We therefore resample the frozen scores at attack
rates from 1--50\% and audit budgets from 0.5--10\%. At 1\% attacks and a 2\%
audit budget, the exact oracle misses none. The activation observers miss 2.65
attacks on average for neutral context (95\% range $[0,6]$) and 3.94 next to the
verdict prompt ($[1,7]$). The text, score-distribution, and basic-prompt observers
miss 9.05 ($[6,12]$), 9.15 ($[6,13]$), and 11.85 ($[9,15]$). Rare attacks expose
differences hidden by the balanced panel.

\begin{figure}[H]
\centering
\includegraphics[width=\textwidth]{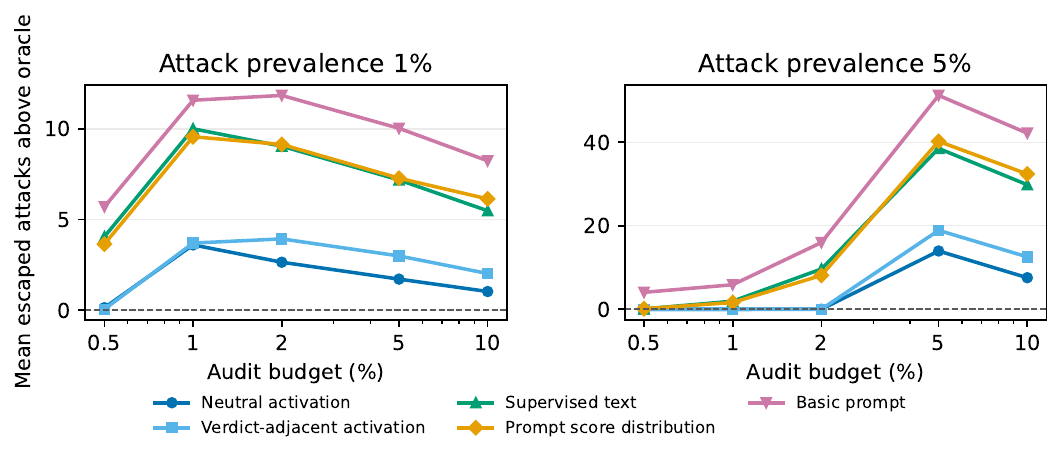}
\caption{Qwen2.5-7B APPS: excess escaped attacks over the exact-label oracle
across attack prevalences and audit budgets in 2,000 resampled populations.}
\label{fig:apps-prevalence}
\end{figure}

A post-outcome diagnostic applies a published sparse-probing procedure to the
frozen Qwen panel \cite{kantamneni2025saeprobes}. Its fitted readout
retains 127 nonzero coefficients, versus 3,584 for a layer-matched dense
residual, but misses 4.67 rather than 2.05 attacks on average. This is
\emph{compression without a better decision} on this frozen panel, not a
general claim about SAE features. Appendix~\ref{app:apps-diagnostics} reports
the full diagnostic, including allocation by APPS family and input length.
This is an external monitor test, not a deployment claim.

\subsubsection{A matched Gemma replication changes the mean ranking}

We repeat the APPS contract on Gemma-2-9B-it, keeping the rows, prompts,
controller, budgets, and evaluation loss fixed \cite{gemmateam2024gemma2}. The
design was frozen after the Qwen results but before any Gemma monitor outcome
was inspected. It is a prospectively frozen matched replication, not an
independent preregistration.

\begin{table}[H]
\centering
\small
\caption{Matched Gemma-2-9B-it APPS replication. AUROC uses the balanced
locked panel of 744 honest--backdoor pairs. Realized violations use the
registered primary cell: 1\% attacks and a 2\% audit budget. Entries are means
and central 95\% outcome ranges over 2,000 resampled populations; lower is
better.}
\label{tab:gemma-apps-monitor-long}
\begin{tabular}{@{}lrr@{}}
\toprule
Observer & AUROC & Realized violations \\
\midrule
Exact-label oracle & 1.000 & 0.000 $[0,0]$ \\
Verdict-adjacent residual & \textbf{0.969} & \textbf{2.178 $[0,5]$} \\
Score-distribution ridge & 0.913 & 4.423 $[1,8]$ \\
Neutral residual & 0.944 & 4.947 $[2,9]$ \\
Supervised text ridge & 0.868 & 9.054 $[6,12]$ \\
Gemma Scope SAE probe & 0.870 & 9.070 $[6,12]$ \\
Detailed prompted score & 0.893 & 10.803 $[7,14]$ \\
Prompt ensemble & 0.860 & 13.689 $[11,15]$ \\
Basic prompted monitor & 0.809 & 14.927 $[14,15]$ \\
\bottomrule
\end{tabular}
\end{table}

Residual access has the lowest mean loss in both models, but its best measured
context changes: neutral on Qwen and verdict-adjacent on Gemma. Output
information still matters. On Gemma, the score-distribution ridge has a
slightly lower violation mean than the neutral residual, 4.423 versus 4.947,
despite lower AUROC.

\textbf{Classification is not enough on the external benchmark.} This safety
result is a second instance of the prediction--decision distinction in
Sec.~\ref{sec:ioi-prediction-decision}. The detailed prompted monitor has
higher AUROC than the official Gemma Scope SAE probe, 0.893 versus 0.870, but
worse mean realized violations, 10.803 versus 9.070
\cite{lieberum2024gemmascope}. The detailed-minus-SAE paired difference
averages 1.73 with a central 95\% outcome range of $[-2,6]$, so this is a
reversal of the mean scorecard ranking, not a superiority claim for every
resampled population. Thus, higher classification quality need not induce the
lower-mean-loss audit policy.

\subsubsection{Qwen3.5 reverses the Qwen2.5 context ranking}

Using the same matched protocol, we repeat the APPS rows, split, prompts, audit
controller, and primary operating point on Qwen3.5-9B.\footnote{Pinned model:
\url{https://huggingface.co/Qwen/Qwen3.5-9B}.} We prospectively froze the
design after the Qwen2.5 and Gemma outcomes were known but before inspecting
any Qwen3.5 monitor outcome; this is not an independent preregistration.

\begin{table}[H]
\centering
\small
\caption{Prospectively frozen Qwen3.5-9B APPS replication. AUROC uses the
balanced locked panel. Mean violations use the primary operating point: 1\%
attacks, a 2\% audit budget, and 2,000 paired resampled populations. Lower
violations are better. The exact-label row is an evaluator-only bound; bold
marks the best deployable row.}
\label{tab:qwen35-apps-monitor-long}
\begin{tabular}{@{}lrr@{}}
\toprule
Observer & AUROC & Mean violations \\
\midrule
Exact-label oracle (evaluator only) & 1.000 & 0.000 \\
Verdict-adjacent residual & \textbf{0.977} & \textbf{1.976} \\
Supervised score distribution & 0.940 & 5.120 \\
Neutral residual & 0.936 & 6.601 \\
Supervised text & 0.868 & 9.054 \\
Basic prompted score & 0.876 & 12.218 \\
Official Qwen-Scope SAE probe & 0.811 & 12.725 \\
Basic/detailed prompt ensemble & 0.823 & 12.956 \\
Detailed prompted score & 0.645 & 13.985 \\
\bottomrule
\end{tabular}
\end{table}

The verdict-adjacent residual now wins. Neutral minus verdict averages 4.625
additional violations, with a central 95\% outcome range of $[1,8]$; verdict is
better in 99.15\% of resampled populations. Qwen3.5 therefore agrees with the
Gemma ordering and reverses Qwen2.5. The useful context can change even between
checkpoints from the same model family.

The other rows show a second reversal. The basic prompted monitor has slightly
higher AUROC than supervised text, 0.876 versus 0.868, but causes 3.164 more
mean violations, 12.218 versus 9.054. The detailed prompted monitor also falls
from AUROC 0.893 on Gemma to 0.645 on Qwen3.5 under the same APPS contract.
Monitor quality must therefore be tested on the model and operating regime in
which it will be used.

We also test the official Qwen-Scope SAE.\footnote{Pinned SAE:
\url{https://huggingface.co/Qwen/SAE-Res-Qwen3.5-9B-Base-W64K-L0_50}.}
Calibration selects layer 12 and ridge 100. The SAE probe has AUROC 0.811 and
12.725 mean violations, compared with 6.601 for the dense neutral residual.
SAE minus dense averages 6.124 additional violations, with a central 95\%
outcome range of $[2,10]$; dense is better in 99.65\% of populations. This is a
bounded cross-checkpoint control, not a general result against SAE features:
the released SAE was trained on Qwen3.5-9B-Base, while we evaluate the
post-trained Qwen3.5-9B model \cite{deng2026qwenscope}. Qwen's public
collection did not include a checkpoint-matched post-trained 9B SAE at the
time of the study. Appendix~\ref{app:qwen35-apps} records the full protocol
and mismatch.

\subsection{Four checks before claiming that an internal safety signal helps}

These studies suggest a reusable sequence. Before interpreting an internal
safety signal, ObserverBench asks:

\begin{enumerate}[leftmargin=1.4em,itemsep=0.2em,topsep=0.25em]
  \item \textbf{Does the model output already suffice?} Read the relevant output
  distribution rather than one sampled answer. If it is adequate, stop.
  \item \textbf{Does the input already suffice?} Train a text or input-feature
  baseline with the same supervision. If it matches the internal probe, an
  activation-specific explanation is unnecessary.
  \item \textbf{Does the signal remain without a request for a verdict?} Measure
  the same examples in a neutral context to separate input processing from a
  signal created or reshaped by the monitoring prompt.
  \item \textbf{Does it work under the intended policy?} Test frozen scores at
  the attack rate, intervention budget, deployment groups, and loss the
  controller will face.
\end{enumerate}

These checks locate where useful information becomes available. They do not by
themselves establish deception, intent, or hidden knowledge. How does this
task-relative contract relate to existing interpretability
benchmarks, decision-focused learning, and AI-control evaluations? The next
section places ObserverBench among those lines of work.

\section{Related work}

\paragraph{Interventions, steering, and interpretability benchmarks.}
Activation addition and representation engineering edit internal activations,
while feedback steering and local optimal-control methods place those edits in a
controller
\cite{turner2023activation,zou2023representation,nguyen2025feedback,skifstad2026local}.
Activation and attribution patching estimate component effects; AtP* provides a
local approximation; causal abstraction tests assigned causal roles; and pyvene
executes model interventions
\cite{syed2024attribution,kramar2024atp,geiger2024alignments,wu2024pyvene}.
Interactions between mediators can hide or inflate patching effects
\cite{vaidyanathan2026curse}. MIB, AxBench, and InterpBench instead benchmark
circuit localization, concept detection and steering, or recovery of known
structure \cite{mueller2025mib,wu2025axbench,gupta2024interpbench}. Mechanistic
tomography treats measurements as a way to recover a mechanism or effect map
\cite{erramilli2026tomography}; ObserverBench starts with such an estimate and
asks whether it supports a declared intervention or controller.
Sparse probing asks whether SAE features support compact supervised predictors
\cite{kantamneni2025saeprobes}. ObserverBench adds the operational question:
does the sparser fitted readout improve the action chosen under a fixed policy?

\paragraph{Decisions and AI control.}
Decision-focused learning evaluates a predictor through the downstream decision
rather than assuming that lower prediction error is enough
\cite{donti2017task,elmachtoub2022smart}. The same distinction motivates calls
to judge interpretability by the actions it enables \cite{orgad2026actionable}.
AI control evaluates complete safety protocols against strategies chosen by an
intentionally subversive model, while ControlArena supplies reusable
environments, monitors, and protocol-level evaluation
\cite{greenblatt2024aicontrol,inglis2025controlarena}. ObserverBench asks a
complementary component-level question. It holds the controller, audit rule,
action budget, and test cases fixed, then compares the observer scores used
inside that decision problem. It evaluates monitor adequacy, not end-to-end
robustness to subversion.

\paragraph{Borrowed testbeds.}
The IOI circuit supplies known structure rather than a new mechanism claim
\cite{wang2023ioi}. Conditional backup, Hydra-style self-repair, copy
suppression, and prompt-dependent self-repair are already documented
\cite{mcgrath2023hydra,mcdougall2023copy,rushing2024selfrepair}. Induction heads
support the repeated-key continuation used in the Qwen task
\cite{olsson2022induction}, and Qwen2.5-7B supplies a second model surface
\cite{qwen2024qwen25}. We use these systems to define interventions, then judge
observers on fresh held-out masks and prompts rather than claim a new IOI or
induction mechanism. What remains unsupported by the current tasks and should
be tested next?

\section{Limitations and next steps}
\label{sec:limitations}

The paper makes bounded claims:

\begin{itemize}
  \item \textbf{Control.} \emph{Scope:} The exact results assume affine
  readouts, one fixed direction, proportional control, and no clipping. The
  nonlinear test reruns a learned Transformer block, but the model is small and
  its underlying binary features are known by construction; the 29.0\% gain belongs to the matched
  estimator--direction pair, not the estimator by itself. \emph{Next test:} extend
  the comparison to token-level control in larger models with independently
  usable edit directions.

  \item \textbf{Pretrained-model interventions.} \emph{Scope:} The GPT-2 and
  Qwen studies use pinned models, selected head sets, fixed ablations, and narrow
  prompt distributions. The IOI confirmation is pilot-informed and locally
  sealed rather than independently preregistered. The Qwen task keeps only
  eligible prompts; full-vocabulary top-1 accuracy on unfiltered prompts is
  24.45\%, and the direct-loss observer ties the natural-mean observer at the
  largest target.
  \emph{Next test:} use more models, unseen prompt templates, and independently
  fixed candidate sets and targets.

  \item \textbf{Authorization safety.} \emph{Scope:} The non-executing task asks Qwen
  directly for an authorization verdict, so it cannot establish a uniquely
  hidden safety state or test harmful execution, deception, or adaptive attacks.
  Its held-out failure also shows that one global threshold can hide poor
  calibration between groups. \emph{Next test:} remove the explicit verdict
  question and provide calibration data for every intended deployment group
  while keeping the model, budgets, and evaluator fixed.

  \item \textbf{APPS backdoors.} \emph{Scope:} The study uses one stored-backdoor
  task rather than attacks adapted to each monitor. Two model families and
  three checkpoints supply learned representations. The Qwen2.5 sparse probe
  and Qwen3.5 official-SAE probe each concern one task and one selected probing
  rule. The official Qwen-Scope SAE was trained on Qwen3.5-9B-Base and applied
  here to the post-trained Qwen3.5-9B model, so it is not a checkpoint-matched
  SAE test. The Gemma and Qwen3.5 runs were prospectively frozen after earlier
  outcomes were known, not independently preregistered. All scoring is offline, and the
  prevalence sweeps keep the dataset-family mix fixed. Neither study tests
  deception or an adaptive attacker. These scorecards compare observers inside
  a fixed audit rule; they do not establish that a complete AI-control protocol
  remains safe under intentional subversion. \emph{Next test:} use sequential
  attack trajectories with evaluator-held outcomes and both trusted and
  deliberately corrupted telemetry. Declare who can omit, delay, or spoof each
  view, keep audit and shutdown budgets fixed, and then add monitor-aware
  attacks. Episodes, rather than individual requests, should be the resampling
  unit when agents share state or communication.

  \Needspace{4\baselineskip}
  \item \textbf{Artifact.} \emph{Scope:} The release provides checked control,
  finite-effect, and safety interfaces, but it is not yet a general task
  ecosystem. \emph{Next test:} add literature baselines, another safety
  environment with adaptive attacks, and a stable end-to-end reproduction path.
\end{itemize}

\Needspace{12\baselineskip}
\section{Conclusion}

ObserverBench asks a simple question: is an estimate good enough for the action
chosen from it? The answer is task-relative. An observer need not reproduce
every detail of a model; it must preserve the information that the available
action, decision rule, and loss actually use. Across the three modes, the
experiments show that a better mean-effect predictor need not choose a better
intervention, a perfect classifier can allocate a safety budget poorly, and an
accessible output signal can make an internal probe unnecessary. The safety
scorecards also show that AUROC and mean realized violations can order monitors
differently, that the best measurement context can change between closely
related checkpoints, and that a sparser fitted readout need not improve the
resulting action. ObserverBench records these choices in versioned tasks and applies the
same fixed policy to
every submission rather than ranking observers in the abstract. Before trusting
an observer, state what it may read, freeze the action rule and budget, and score
the outcome the action actually produces.

\clearpage
\appendix
\section{Artifact interfaces and sealed evaluation}
\label{app:artifact-interfaces}

The artifact exposes three versioned interfaces:

\begin{itemize}[leftmargin=1.5em,itemsep=0.1em,topsep=0.2em]
  \item \path{observerbench.api.v0} for control composition;
  \item \path{observerbench.effect_prediction.v0} for held-out finite-effect
  prediction; and
  \item \path{observerbench.safety_protocol.v0} for safety triage.
\end{itemize}

Their machine-readable schemas, example submissions, and validation commands
ship with the repository. The version identifiers prevent a result produced
under one contract from being compared silently with a result produced under
another.

The task registry states which submission route each bundled task supports.
In the current release, the analytic control task provides the checked
external-observer adapter. Its benchmark wrapper constructs the controller and
reuses the task's fixed generator, readouts, normalization, and metrics. A
lower-level run may replace the controller, but the resulting score is marked
non-comparable because it no longer represents the same decision problem.

Public safety-task packs contain the threat model, allowed actions,
consequence loss, operating budgets, training measurements, and held-out
queries. They do not contain the held-out evaluator answers. The evaluator
stores those targets separately and links them to the public pack by a
cryptographic hash. A submission can therefore be checked against the declared
task while the labels used for final scoring remain sealed. This layout also
allows a researcher to add a data-only safety task without modifying the core
workbench.

The bundled GPT-2 IOI and Qwen induction-copy registries implement the same
separation for finite-effect prediction. Each registry exposes frozen
measurement tables at declared budgets, checks the model and split recorded by
the task, and uses different prompts for calibration measurements and held-out
targets. An outside observer can submit its predictions without loading the
model or rerunning the intervention experiment. The registries include
per-component and additive ridge observers as initial comparison rows.

The external APPS registry contains separate Qwen2.5-7B, Gemma-2-9B-it, and
Qwen3.5-9B scorecards. Prompt, output, text, dense-residual, and SAE-feature
observers are evaluated under the same audit contract within each model panel.
The Qwen3.5 release includes the frozen source manifest and an aggregate
checked summary containing the scorecard rows, paired contrasts, and official
SAE selection. Per-example evaluator labels remain sealed. The registry does
not issue a cross-model rank.

Runs use strict JSON and record
the contract and result-schema versions, component classes and parameters,
package and dependency versions, the source revision, configuration and result
hashes, and observer provenance supplied by the submitter.

\section{Robustness checks and protocol provenance}
\label{app:earlier-diagnostics}

This appendix retains checks that directly qualify the main results. Superseded
control fixtures and development history that support no live claim are kept
with the artifact rather than in the paper.

\subsection{Nonlinear estimator--direction robustness}

Table~\ref{tab:nonlinear-suffix-factorial} reports all four
estimator--direction combinations behind Sec.~\ref{sec:nonlinear-suffix}.

\begin{table}[H]
\centering
\scriptsize
\begin{tabular}{@{}rllrrrrr@{}}
\toprule
$\gamma$ & Estimate & Direction & ISE & $g_E\!\leq\!0$ & $|p|\!\geq\!1$ & Would clip & Beyond clean max \\
\midrule
1.15 & FO & FO & 2.632 & 0/12 & 0/12 & 0/192 & 0/192 \\
1.15 & FO & Lifted & 2.381 & 10/12 & 10/12 & 30/192 & 56/192 \\
1.15 & Lifted & FO & 3.666 & 10/12 & 10/12 & 56/192 & 96/192 \\
1.15 & Lifted & Lifted & 1.870 & 0/12 & 0/12 & 0/192 & 0/192 \\
\addlinespace
0 & FO & FO & 1.693 & 0/12 & 0/12 & 0/192 & 0/192 \\
0 & FO & Lifted & 1.659 & 0/12 & 0/12 & 0/192 & 0/192 \\
0 & Lifted & FO & 1.693 & 0/12 & 0/12 & 0/192 & 0/192 \\
0 & Lifted & Lifted & 1.662 & 0/12 & 0/12 & 0/192 & 0/192 \\
\bottomrule
\end{tabular}
\caption{Secondary nonlinear-suffix estimator--direction factorial. Counts for $g_E$ and $p$ use 12 seed-level means; the last two columns use 192 rollout conditions per cell. The crossed arms at $\gamma=1.15$ fail loop-conditioning and clean-archetype scale diagnostics, so they do not identify an estimator-only advantage. The clean maximum is a scale check, not a manifold boundary.}
\label{tab:nonlinear-suffix-factorial}
\end{table}

\subsection{IOI interaction robustness}

Table~\ref{tab:ioi-budget-appendix} reports the full nested-budget curve behind
the interaction comparison in Sec.~\ref{sec:ioi}.

\begin{table}[H]
\centering
\small
\begin{tabular}{@{}rrrrr@{}}
\toprule
Budget & Per-head & Count & $P\!\times\!E$ & All pairs \\
\midrule
20  & 0.846 & 1.579 & 1.333 & 0.795 \\
40  & 0.675 & 0.732 & 0.544 & 0.559 \\
80  & 0.496 & 0.482 & 0.406 & 0.412 \\
160 & 0.486 & 0.473 & 0.394 & 0.395 \\
\bottomrule
\end{tabular}
\caption{Fresh held-out IOI mean-effect MAE by nested measurement budget.}
\label{tab:ioi-budget-appendix}
\end{table}

\subsection{Protocol provenance}

\begin{itemize}
  \item \textbf{IOI target sensitivity.} The first measurement copy was set
  aside before inspection because its source seal omitted the head-mapping file.
  After expanding the seal, we independently recomputed the frozen predictions
  and actions and repeated the 148-mask measurement. The two copies are
  identical at the byte level; only the second enters the analysis. Under the frozen
  raw-score rule, the transformed-mean control selects negative predicted
  losses in 44, 44, and 16 pools across the three targets, while direct loss
  does so in 4, 0, and 0. We retain those scores rather than clip them after
  observing the outcomes.
  \item \textbf{Qwen Copy-v2.} Copy-v1 stopped at 92.97\% candidate accuracy,
  below its frozen 95\% gate, before attention or intervention outcomes were
  computed. Copy-v2 used fresh tokens and prompts under the same gate before
  generating the locked results in Section~\ref{sec:qwen-induction}.
\end{itemize}

\subsection{Authorization tail-loss selection}

Selecting the activation observer by tail loss chooses the same layer and ridge
setting as selecting it by mean loss. Its tail loss differs from action-only by
0.039, with paired 95\% interval $[-1.013,1.156]$. We therefore retain the mean
activation gain but do not claim a tail-loss gain over action-only.

\subsection{Qwen action loss by target}

The main Qwen result gives equal weight to three targets.
Table~\ref{tab:qwen-per-target} shows the individual comparisons. Direct loss
beats transformed mean and no-op at every target. It beats natural mean at the
first two targets and ties it within uncertainty at the largest, so the
aggregate result is not a win in every cell.

\begin{table}[H]
\centering
\scriptsize
\setlength{\tabcolsep}{4pt}
\begin{tabular}{@{}lrrrr@{}}
\toprule
Target & Direct loss & Gain vs. natural mean & Gain vs. transformed mean & Gain vs. no action \\
\midrule
$0.25s$ & 0.609 & +12.0\% [0.034, 0.141] & +14.8\% [0.035, 0.188] & +29.4\% [0.205, 0.300] \\
$0.50s$ & 0.842 & +17.6\% [0.108, 0.256] & +18.7\% [0.117, 0.283] & +51.3\% [0.824, 0.942] \\
$0.75s$ & 1.158 & \textbf{$-$0.7\% [-0.038, 0.021]} & +6.3\% [0.022, 0.146] & +55.3\% [1.331, 1.530] \\
Equal-target mean & 0.870 & +8.9\% [0.053, 0.118] & +12.6\% [0.083, 0.173] & +49.6\% [0.805, 0.908] \\
\bottomrule
\end{tabular}
\caption{Qwen Copy-v2 action loss by target, where $s=3.454$ is the confirmed
full-panel mean effect. Each gain cell gives the relative loss change and, in
brackets, a 95\% four-family-prompt-cluster-by-pool bootstrap interval for the
absolute comparator-minus-direct difference. The negative natural-mean point
estimate at $0.75s$ is retained.}
\label{tab:qwen-per-target}
\end{table}

\section{Post-outcome APPS diagnostics}
\label{app:apps-diagnostics}

These checks use frozen activations and scores after the main result was known.
They do not change the locked comparison in Table~\ref{tab:apps-monitor}.

\paragraph{Does a sparse-autoencoder basis improve the audit decision?}
We apply the sparse-probing procedure of \cite{kantamneni2025saeprobes} to a
pinned public BatchTopK SAE trained on Qwen2.5-7B-Instruct layer-19
residuals.\footnote{\url{https://huggingface.co/andyrdt/saes-qwen2.5-7b-instruct}}
Feature ranking uses only the fit split; feature count and regularization use
only calibration; and the locked test is scored once. The dense residual and
SAE reconstruction controls use the same layer-19 access, data splits, and
calibration rule.

\begin{table}[H]
\centering
\small
\caption{Post-outcome published-method integration on the frozen Qwen APPS
panel. Mean violations use 2,000 resampled populations with 1\% attacks and a
2\% audit budget. Lower is better.}
\label{tab:qwen-sae-probe-long}
\begin{tabular}{@{}lrrr@{}}
\toprule
Observer & Nonzero coefficients & AUROC & Mean violations \\
\midrule
Dense residual logistic & 3,584 & \textbf{0.983} & \textbf{2.05} \\
SAE sparse readout & 127 & 0.953 & 4.67 \\
SAE reconstruction logistic & 3,584 & 0.954 & 4.79 \\
\bottomrule
\end{tabular}
\end{table}

The sparse SAE readout uses fewer fitted coefficients, but it does not improve
mean audit loss on this frozen panel. Dense minus sparse averages $-2.62$
escaped attacks with a central 95\% outcome range of $[-6,0]$; the dense probe
is worse in 1.25\% of draws. This is not explained only by long inputs: on the
1,301 locked rows within the SAE's 1,024-token training length, its AUROC is
0.956 versus 0.983 for the dense residual. The checkpoint is also off its
training distribution here: observed mean $L_0$ is 115 rather than the reported
240. The result concerns this checkpoint, layer, task, and probing rule; it is
not a general claim that SAE features cannot support safety monitoring.

\paragraph{Does the answer position explain the gap between prompted and
neutral activations?}
The fixed calibration rule selects layer 28 for the verdict-adjacent observer,
with calibration AUROC 0.935 and locked AUROC 0.953. A post-outcome check of
cached layer 20 raises locked AUROC to 0.958, but calibration would not have
selected it (AUROC 0.929), and the neutral layer-20 probe remains higher at
0.979. This check cannot separate answer position from the monitoring
instruction or other context changes.

\paragraph{Does pooling hide score differences between groups?}
Figure~\ref{fig:apps-strata} groups neutral-probe scores by APPS source family
and input-length quartile. Separation remains strong within every family
(AUROC 0.956--0.991) and length quartile (0.974--0.988), but score scale shifts.
At the global 2\% cutoff, a family's share of caught attacks is 0.36--2.38 times
its population share; across length quartiles the range is 0.67--1.43. These
ratios come from only 29 audits and are descriptive. They show that pooled
discrimination need not imply equal allocation between groups. The fixed
family mix and prevalence sweep do not model an attacker choosing where to
attack.

\begin{figure}[H]
\centering
\includegraphics[width=\textwidth]{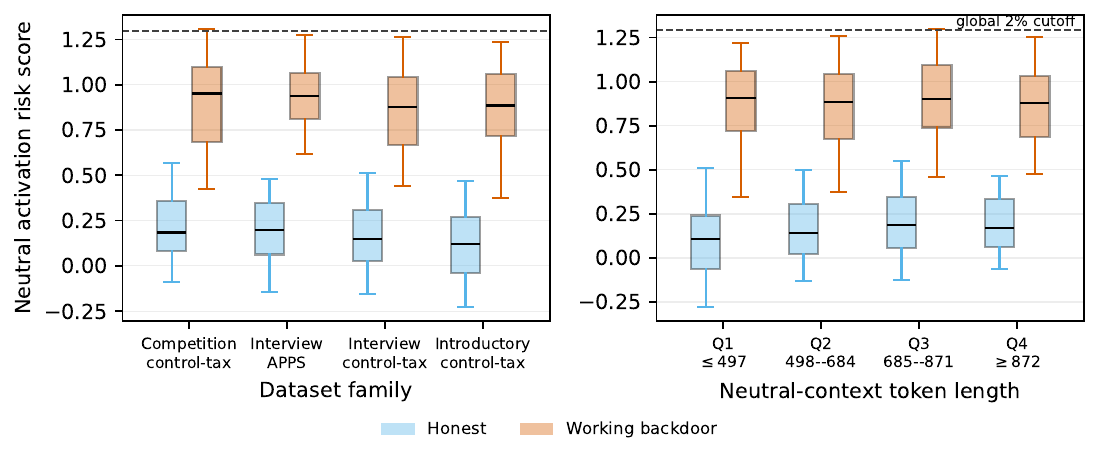}
\caption{Neutral-context scores by APPS source family and input-length
quartile. Separation remains strong within each group, but score scales and the
audits assigned by one global cutoff differ between groups.}
\label{fig:apps-strata}
\end{figure}

\section{Qwen3.5 APPS replication protocol}
\label{app:qwen35-apps}

This replication keeps the APPS task fixed while changing the model. It uses
the same 1,444 honest--backdoor pairs as the Qwen2.5 and Gemma studies, split by
problem ID into 500 fit, 200 calibration, and 744 locked-test pairs. Both
members of a pair remain in the same split. The prompts, score tokens,
controller, audit budgets, resampling population, and realized-violation loss
also remain fixed. Thinking is disabled for every Qwen3.5 view.

The design was chosen after the Qwen2.5 and Gemma results were known. Before
reading any Qwen3.5 monitor outcome, we pinned the model and official SAE
revisions, declared the primary contrast, and sealed the source manifest. The
primary contrast is neutral-residual minus verdict-adjacent-residual mean
violations at 1\% attack prevalence and a 2\% audit budget. This is a local
prospective seal, not an independent preregistration.

The pinned model revision is
\path{c202236235762e1c871ad0ccb60c8ee5ba337b9a}; the pinned SAE revision is
\path{7bb2370d360e566b2d8acb8b3d15a0b2b88f52a8}. The source files and sealed
manifest ship with the artifact, so the aggregate scorecard can be tied to the
exact code and configuration used for the run without publishing per-example
evaluator labels.

We measure post-layer residuals at layers 12, 20, and 28. Fit data train each
probe; calibration chooses the layer and ridge value; the locked test is read
only after selection. One fit pair exceeds the 4,096-token limit and is
excluded under the frozen rule. No calibration or locked-test pair reaches the
limit. A compatibility-only smoke first checks prompt lengths, score tokens,
residual shapes, and the three SAE files without fitting or ranking an
observer.

The official-feature control uses 65,536-wide TopK-50 Qwen-Scope SAEs at the
same three layers. Calibration selects layer 12 and ridge 100. The SAEs were
trained on Qwen3.5-9B-Base, whereas the dense residuals and outcomes come from
post-trained Qwen3.5-9B. At the time of the study, Qwen's official public
collection listed only base-trained 9B Qwen3.5 SAEs, not a checkpoint-matched
SAE for the post-trained model.\footnote{Official collection:
\url{https://huggingface.co/collections/Qwen/qwen-scope}.} The
Qwen-Scope card also states that using a base-model SAE to explore a
post-trained checkpoint is often reasonable. We declared the mismatch before
the run. The SAE row therefore tests a
released feature family under transfer; it is not a checkpoint-matched
comparison or evidence that SAE features generally fail for safety monitoring.

\section{Why a lexical safety fixture was rejected}
\label{app:rejected-cyber}

An earlier fixture tried to distinguish unsafe actionable assistance from benign
discussion of the same security topic. A TF--IDF baseline and a linear
activation observer both solved its small split, even with one topic family held
out. Removing lexical directions then removed the target signal by construction.
The task could not distinguish an internal safety representation from words that
revealed the label, so it is not an ObserverBench result.

The failure motivated the controls used in the authorization study: test a text
baseline, hold the requested action fixed while changing the authorization
context, and compare with the exact policy rule when one exists. The later text,
output, and neutral-context controls add a further check: if the prompt asks
directly for a safety verdict, an activation probe by itself cannot establish a
uniquely hidden safety state. More examples with the same lexical structure
would not repair the rejected fixture.

\end{document}